\documentclass{article}

\usepackage[preprint]{neurips_2026}

\usepackage[utf8]{inputenc} %
\usepackage[T1]{fontenc}    %
\usepackage{hyperref}       %
\usepackage{url}            %
\usepackage{booktabs}       %
\usepackage{amsfonts}       %
\usepackage{nicefrac}       %
\usepackage{microtype}      %
\usepackage{xcolor}         %

\usepackage[utf8]{inputenc} %
\usepackage[T1]{fontenc}    %
\usepackage{hyperref}       %
\usepackage{url}            %
\usepackage{booktabs}       %
\usepackage{amsfonts}       %
\usepackage{nicefrac}       %
\usepackage{microtype}      %
\usepackage{xcolor}         %

\usepackage{bm} %
\usepackage{amsmath} 
\usepackage{amssymb}
\usepackage{graphicx}
\usepackage{amsthm}
\usepackage[shortlabels]{enumitem}
\usepackage{todonotes}
\usepackage[capitalize,noabbrev]{cleveref}
\usepackage{algorithm}
\usepackage{algpseudocode}
\usepackage{subcaption}
\usepackage{wrapfig}

\definecolor{tabblue}{HTML}{56B4E9}
\definecolor{taborange}{HTML}{E69F00}
\newcommand{\legendline}[1]{\textcolor{#1}{\rule[0.45ex]{1.8ex}{1.4pt}}}
\newcommand{\legendmark}[2]{%
  \textcolor{#1}{%
    \rule[0.5ex]{0.6ex}{1.2pt}%
    \kern0.15ex\raisebox{-0.1ex}{\scalebox{0.7}{$#2$}}\kern0.15ex%
    \rule[0.5ex]{0.6ex}{1.2pt}%
  }%
}
\newcommand{\colorours}{\legendline{tabblue}}
\newcommand{\colormh}{\legendline{taborange}}
\usepackage{mwe} %
\usepackage{tikz}

\renewcommand{\epsilon}{\varepsilon}

\def\setof#1{\left\{#1  \right\}}
\newcommand{\sep}{\,\vert \,}

\def\abs#1{\left|#1  \right|}
\def\norm#1{\left\| #1 \right\|}
\def\indicator#1{\mathbf{1}\left\{ #1 \right\}}

\renewcommand*\d{\mathop{}\!\mathrm{d}}

\newcommand{\xtest}{x_{\mathrm{test}}}
\newcommand{\ytest}{y_{\mathrm{test}}}
\theoremstyle{plain}
\newtheorem{theorem}{Theorem}[section]
\newtheorem{proposition}[theorem]{Proposition}
\newtheorem{lemma}[theorem]{Lemma}
\newtheorem{corollary}[theorem]{Corollary}
\theoremstyle{definition}

\newtheorem{observation}{Observation}
\newtheorem{fact}{Fact}
\theoremstyle{remark}

\renewcommand{\P}{\mathbb{P}}
\newcommand{\E}{\mathbb{E}}

\def\calF{\mathcal{F}}

\def\calV{\mathcal{V}}
\def\calX{\mathcal{X}}

\newcommand\cc{\boldsymbol{\mathit{c}}}

\renewcommand\ss{\boldsymbol{\mathit{s}}}

\newcommand\uu{\boldsymbol{\mathit{u}}}

\newcommand{\covdist}{P_X}
\newcommand{\prior}{p_{\text{LLM}}}
\newcommand{\dcal}{D_\text{cal}}

\newcommand{\xhdr}[1]{\vspace{0mm}\noindent{{\bf #1.}}}

\usepackage{hyperref}
\hypersetup{
  colorlinks = true, 
  urlcolor = {blue!50!black},
  linkcolor = {blue},
  citecolor = {green!50!black}
}

\title{Conformal Language Modeling via Posterior Sampling}

\author{%
  Nicolas Emmenegger\thanks{Equal contribution.}
  \quad
  Theo X.\ Olausson$^*$
  \quad
  Armando Solar-Lezama
  \quad
  Chara Podimata
  \\
  Massachusetts Institute of Technology
  \\
  \texttt{\{nemm,theoxo,asolar,podimata\}@mit.edu}
}

\begin{document}

\maketitle

\begin{abstract}
  Large Language Models remain plagued by hallucinations.
Recent work has sought to tame their prevalence using statistical techniques based on conformal prediction, with both theoretical and empirical success.
However, these methods operate in a post-hoc fashion,
treating the sampling procedure itself as atomic and 
then surgically altering samples to remove hallucinated claims.
This disconnect between filtering and generation can result in samples that are incoherent, inconsistent, or simply unlikely under the model itself. Moreover, post-hoc surgery is unable to shift probability mass towards more useful and helpful responses.
To address these issues, we propose to instead sample from approximations to an LLM posterior, where the conditioning event corresponds to a calibrated, high-scoring region.
We develop a calibration procedure tailored to the setting of conditional sequential generation that effectively identifies this region and achieves target risk control.
Empirically, we apply our method to case studies focused on open-ended biography generation and mathematical problem solving; compared to prior work, we obtain the same statistical guarantees, with higher downstream utility.
\end{abstract}

\section{Introduction}
\label{section:intro}

Large Language Models (LLMs) are increasingly deployed in high-stakes settings ranging from medical diagnosis~\citep{gaber2025evaluating} to legal research and case analysis~\citep{dehghani2025large}. 
Yet even the most sophisticated models available today still routinely hallucinate false claims~\citep{Fan2026HalluHardAH}.
This has motivated a growing body of work on quantifying and reducing hallucinations, including methods that train LLMs to predict their own confidence~\citep{damani2025beyond},
methods that produce calibrated predictions of factuality~\citep{detommasomulticalibration}, or methods that generate sets of outputs containing a factual response with high probability \citep{quach2024conformal}. 

One particularly promising recent direction has been to leverage  \emph{conformal prediction}
\citep{mohri:hashimoto,cherian2024large,rubin-toles2025conformal},
a statistical wrapper yielding assumption-lean guarantees.
This line of work seeks to control the expected value of a loss function $\ell : X \times Y \to \mathbb{R}_{\geq0}$,
such as a measure of an output $y$'s factuality with respect to a prompt $x$.
To control this quantity, conformal methods split the output into distinct claims, and leverage a heuristic score function, which serves as an (imperfect) proxy of the loss. By observing the relationship between the scores and the true losses on a held-out calibration set, these methods then calibrate a \emph{post-hoc filtering} mechanism that provably controls the risk.
Formally, given a calibration set $\dcal = \{(x_i, y_i, \ss_i, \ell_i)\}_{i=1}^n$ of $n$ samples from the LM alongside their claim-level scores $s_{ij}$ and observed losses $\ell_i = \ell(x_i, y_i)$, they construct a mapping $F$ that filters out
low-confidence claims so that the resulting $F(\ytest)$ satisfies $\E[\ell(\xtest, F(\ytest)] \leq \alpha$, for some user-defined risk tolerance $\alpha \in (0,1)$.

While post-hoc surgical altering of outputs makes sense in situations where the user is satisfied with a partially filtered generation, it may lead to samples that have low downstream utility in tasks where claims have strong interdependencies. In other words, while post-hoc filtering guarantees a bound on the expected number of responses containing a false claim, it often renders many of the model's responses \emph{incomplete} or simply \emph{unhelpful} to the user.

To produce samples that are fluent, coherent and satisfying to the user, such methods therefore typically repair the filtered output using either a language model \citep{mohri:hashimoto}, thus reintroducing the risk of fabricating claims, or change the filtering mechanism symbolically through complicated dependency-graph schemes \citep{rubin-toles2025conformal}. However, even with some degree of fluency restored, these methods cannot recover from (over-)agressively filtered responses.

Against this backdrop, in this paper, we ask:
\emph{Can we design a \textbf{risk-controlling} calibration routine that respects the underlying generative process and moves probability mass  towards \textbf{useful} responses whenever the model has the capacity to solve the task at hand?}

We answer this question affirmatively by shifting the target of calibration from a post-hoc filter to the sampling distribution itself. Our key idea is that risk control should reweight the LM's distribution towards completions that are more reliable and more useful, rather than altering the completions in a post-hoc manner. We  reframe conformal language modeling as posterior sampling from the LM, conditioned on an event that corresponds to the response falling within a high-confidence region of the output space. That is, instead of calibrating a post-hoc filtering mechanism, we directly calibrate a sampling distribution  $Q(\cdot | \xtest, \dcal)$ such that
\begin{equation}
\label{eq:introduction:riskcontrol:objective}
    \E_{\dcal, \xtest,\ytest \sim Q(\cdot \mid \xtest, \dcal)}[\ell(\xtest,\ytest)]  \leq \alpha.
\end{equation}

\subsection{Roadmap and Contributions}
\xhdr{Modeling} In Sec.~\ref{section:preliminaries}, we formulate conformal LM factuality control as \emph{posterior sampling} from a calibrated family, indexed by a one-dimensional parameter $\tau$. We define a family of potentials which threshold completions by their cumulative claim-level factuality scores. This amounts to conditioning the LM distribution on high-confidence regions. While, as we show, this is cleanly motivated by a probabilistic formulation, our method's safety guarantees will be agnostic to the choice of score function.

\xhdr{Calibration} In Sec.~\ref{sec:calibration}, we develop an \emph{off-policy} calibration procedure for selecting the posterior threshold $\tau$, \emph{without requiring samples from each candidate posterior}. We face two major challenges: firstly, the posterior's conditional normalizing constant $Z^\tau(x)$ now needs to be estimated from samples, which is ill-conditioned (or even ill-defined) when the model has low (or no) mass on the high-confidence region. Secondly, estimating a plug-in of the importance reweighted loss naively yields a non-monotone objective, making conformal guarantees hard to obtain. We solve the former issue by modeling the calibration objective as an \emph{empirical distribution posterior} that is extended by an explicit abstention path, effectively modeling the prior as a mixture distribution. We solve the second issue by explicitly monotonizing the objective, and show experimentally that this has a negligible impact on conservativeness.
Finally, we connect the empirical posterior distribution to the true population posterior. Using a Glivenko-Cantelli argument, we show that the empirical posterior risk converges to the population posterior risk as the number of calibration samples grows.

\xhdr{Experiments} In Sec.~\ref{sec:experiments}, we evaluate our method on two case studies: open-ended biography generation~\citep{Min2023FActScoreFA} and math reasoning~\citep{Hendrycks2021MeasuringMP}. Our method tracks the desired factuality level while significantly \emph{improving downstream utility} relative to a post-hoc conformal filtering baseline, especially in settings with strong claim interdependencies.

\section{Preliminaries and Modeling}
\label{section:preliminaries}

\newcommand{\lmprior}{p_{\mathrm{LLM}}}
\newcommand{\llm}{\mathrm{LLM}}
\subsection{The Posterior Family}
\label{section:posterior}
We model an LLM with vocabulary $\calV$ and prompt $x$ as a distribution $\lmprior(y \sep x) \in \Delta(\calV^*)$ over completions, where $\calV^*$ is the Kleene closure of $\calV$ (up to some max length $L$, omitted for brevity). We call $\lmprior$ the LM \emph{prior} and extend it with a potential function $\Phi(y \mid x)$ to obtain a \emph{posterior}
\begin{equation}
\label{eq:posterior:original}
    p_{\mathrm{LLM}}^{\Phi}(y \sep x) \propto \lmprior(y \sep x) \Phi(y\mid x).
\end{equation}
Our goal is to pick $\Phi$ so that substituting $p_{\mathrm{LLM}}^{\Phi}(\cdot | x)$ for $Q(\cdot | x)$ in~\Cref{eq:introduction:riskcontrol:objective} controls risk at the target level. To do so, we make $\Phi(\cdot | x)$ depend on a heuristic notion of how ``risky'' $y$ is, and control the perceived risk by setting a one-dimensional parameter $\tau$, yielding a posterior $p_{\llm}^{\Phi^\tau}(\cdot | x) = p_{\llm}^\tau(\cdot | x)$ defined via the parametrized potentials $\Phi^\tau$. The RHS of Equation~\eqref{eq:posterior:original} implicitly depends on the \emph{normalization constant} $Z^{\tau} (x) = \int_{y \in \calV^*} \Phi^\tau(y | x) \d p_{\llm}(y | x)$, which --- as we will see below --- corresponds to the probability mass that survives the threshold $\tau$ for prompt $x$. We next turn to the question of how to instantiate $\Phi^\tau$ for a given $\tau$.

\subsection{From Heuristic Scores to Potentials}
\label{sec:motivational}
To build a family of potential functions that allow us to control the risk, we draw inspiration from prior work on conformal prediction
and leverage a heuristic scoring function.
We design this scoring function so that it gives an estimated confidence for $y$'s most recent claim $c_i$ being correct conditional on the prompt $x$ and all preceding claims $\cc_{< i} = (c_1,\ldots,c_{i-1})$ being assumed to be correct.\footnote{In this paper, we will routinely use the notation $\uu_{\leq i}$ and $\uu_{< i}$ to denote prefix vectors of sequences $(u_i)_i$.}
Formally, that is, the scoring function is of the form $
s: \mathcal{V}^* \times \mathcal{V}^* \times (\mathcal{V}^*)^* \rightarrow \mathcal{S},$
where $\mathcal S \subset \mathbb R$ is chosen to be $(0,1]$ w.l.o.g.\ 
Using $\mathbf{c}_{<i}$ to denote the prior claims whenever clear from context, we will abuse probabilistic conditioning notation and write
$s(c_i,x,(c_1,\ldots,c_{i-1}))$ as
$$
s_i := s(c_i \sep x,\mathbf{c}_{<i}) := s(c_i \sep x,c_1,\ldots,c_{i-1}).
$$
Intuitively, the ``oracle'' score $$s(c_i  \sep x, \cc_{<i}) = \indicator{c_i \text{ is factual given } \cc_{< i} \text{ and prompt } x }$$ would immediately give us an $\alpha=0$ level risk-controlling potential by letting $\Phi(y \sep x) = \indicator{\prod_{i} s_i = 1}$, ensuring that all samples from $p^\Phi_{\llm}$ contain only true claims. In practice, such an oracle cannot be obtained.
The purpose of this paper is to show that with a heuristic choice of $s$,
we can, however, still control the risk by appropriately picking a threshold $\tau$ such that
\begin{equation}
\label{eq:phitau}
\Phi^\tau(y \sep x) = \indicator{\prod_{i=1}^K s(c_i \sep x, \mathbf{c}_{<i}) \geq \tau},
\end{equation}
where $K$ is the (random) number of claims in the output. We will sometimes write $S(y\sep x) = \prod_{i \in [K]} s(c_i \sep x,\cc_{<i})$ to denote completion-level scores. We build intuition for this definition by recalling the chain rule of probabilities. Assume that we are in an idealized scenario where the scores $s_i$ act as correctness forecasts satisfying a weaker-than-oracle (but still unrealistic) notion of conditional calibration. 
Informally, assume (see also a detailed derivation in Appendix~\ref{appendix:motivational})that  
\begin{equation}
	\P[\cc_{i} \text{ is factual}  \sep \ss_{\leq i}; \text{previous claims }\cc_{< i} \text{ are all factual}] = s_i . \label{eq:motivationalassumption:informal} 
\end{equation} 

That is,
the score is a calibrated forecast of the correctness of the present claim given that all preceding claims were factual. Under this assumption and additional technicalities outlined in Appendix~\ref{appendix:motivational}, where the statement can be found in full detail, one can show the following observation.
\begin{observation}[Idealized validity; informal]\label{prop:idealized-informal}
Under an idealized, stepwise-calibrated\footnote{By ``stepwise-calibrated'', we mean that each claim-level score is calibrated as a forecast of the current claim's correctness, conditional on the prompt, the previous scores, and the event that all strictly earlier claims were correct.} scoring function $s(\cdot)$, sampling $y \sim 
p_{\llm}^{1-\alpha}(\cdot \sep x)$ induced by $\Phi^{1-\alpha}(y \sep x)$ satisfies $\E[\ell(x,y)] \leq \alpha$.
\end{observation}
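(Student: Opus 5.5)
The loss here is the indicator that $y$ contains at least one non-factual claim, $\ell(x,y)=\indicator{\exists i\le K: c_i \text{ not factual}}$. The plan is to show that, under the stepwise calibration assumption~\eqref{eq:motivationalassumption:informal}, the conditional probability that \emph{all} claims are factual given the full score vector equals the product of the scores, i.e.\ $\P[\text{all claims factual} \sep x, \ss] = \prod_{i=1}^K s_i = S(y\sep x)$. Once this identity holds, conditioning on the event $\{S(y\sep x)\ge 1-\alpha\}$ forces every surviving completion to have conditional correctness probability at least $1-\alpha$, so its conditional risk is at most $\alpha$.

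\textbf{Step 1 (chain rule).} Write $A_i$ for the event that claims $c_1,\dots,c_i$ are all factual, with $A_0$ the sure event. Then $A_K=\bigcap_i A_i$ and
\[
\P[A_K \sep x,\ss] = \prod_{i=1}^K \P[A_i \sep A_{i-1}, x, \ss].
\]
Assumption~\eqref{eq:motivationalassumption:informal} gives $\P[A_i\sep A_{i-1},x,\ss_{\le i}]=s_i$. The technical point is to upgrade conditioning on $\ss_{\le i}$ to conditioning on the full vector $\ss$, including later scores and the random length $K$. This is exactly what the ``additional technicalities'' of Appendix~\ref{appendix:motivational} must supply. I would adopt a formal version in which the full score sequence is generated sequentially and calibration holds conditional on the whole score path, or equivalently assume that future scores carry no extra information about $c_i$ beyond $\ss_{\le i}$ and $A_{i-1}$. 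I expect this step to be the main obstacle: without such a conditional-independence clause the telescoping fails, since later scores can be informative about earlier claims. I would therefore state the assumption precisely in the form needed.

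\textbf{Step 2 (the posterior does not disturb calibration).} Sampling from $p^{1-\alpha}_{\llm}$ is the same as sampling $y\sim\lmprior(\cdot\sep x)$ and conditioning on $E=\{S(y\sep x)\ge 1-\alpha\}$, assuming $Z^{1-\alpha}(x)>0$. The event $E$ is measurable with respect to $(x,\ss)$, so for almost every $(x,\ss)$ in $E$ the identity from Step~1 still holds under the posterior. Hence
\[
\E_{y\sim p^{1-\alpha}_{\llm}}[\ell\sep x,\ss] = 1-S(y\sep x) \le \alpha .
\]

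\textbf{Step 3 (conclusion).} Taking the tower expectation over $\ss$ under the posterior, and then over $x$, gives $\E[\ell(x,y)]\le\alpha$. Prompts with $Z^{1-\alpha}(x)=0$ must be handled by convention, for example by abstaining, which has zero loss, or by excluding them in the idealized statement.
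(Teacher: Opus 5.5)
You prove the same key identity as the paper, $\P(A_K\sep\calF_K)=\prod_{i\le K}s_i$ (your $\P[A_K\sep x,\ss]$, with $A_k=\{Z_{\le k}=1\}$), and your argument is correct under the hypothesis you adopt. However, you reach it by a different route and under a stronger assumption than the paper uses.

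\textbf{Your route.} You apply the chain rule with every factor conditioned on the whole score path. This forces you to assume $\P(A_i\sep A_{i-1},x,\ss)=s_i$ for each $i$, i.e.\ that $s_i$ is calibrated even given future scores and the stopping position.

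\textbf{The paper's route.} It runs a forward induction along $\calF_k=\sigma(x,\ss_{\le k},\mathbf{E}_{\le k})$.
\begin{enumerate}
\item Stepwise calibration, conditioning only on the past, gives $\E[Z_{\le k}\sep\calF_k]=s_k\,\P(Z_{<k}=1\sep\calF_k)$.
\item A one-step ``no-peeking'' condition, $\indicator{Z_{<k}=1}\perp(s_k,E_k)\sep\calF_{k-1}$, lets it replace $\calF_k$ by $\calF_{k-1}$.
\item Induction then yields $\prod_{i\le k}s_i$ for each fixed $k$.
\item The random length is handled through the stopped $\sigma$-algebra $\calF_K$, using $\{K=k\}\in\calF_k$ and $K\le L$.
\end{enumerate}

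\textbf{How the assumptions compare.} Your hypothesis implies the paper's, since it forces $\P(A_k\sep\calF_j)=\prod_{i\le k}s_i$ for all $j\ge k$. The converse fails. Under the paper's conditions only the prefix probabilities $\P(A_k\sep\calF_k)$ are pinned down. A later score may therefore reveal \emph{which} earlier claim failed, moving $\P(A_1\sep\calF_3)$ away from $s_1$ while leaving $\P(A_2\sep\calF_3)=s_1s_2$ intact. In that case your individual factors differ from $s_i$, even though their product is still correct.

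\textbf{What each buys.} The paper's route needs weaker, more plausible assumptions, at the price of a stopping-time argument. Yours gets a one-line chain rule, with $K$ measurable for free once you condition on the full path, at the price of assuming calibration against the future.

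Your Steps 2--3 match the paper's finish, which likewise uses that the threshold event is $\calF_K$-measurable. Conditioning on it therefore preserves the identity and bounds the risk by $1-\tau$. Your explicit conditioning on $x$ and your remark on $Z^{1-\alpha}(x)=0$ are, if anything, more careful than the paper's write-up.
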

Intuitively, if the scores were stepwise-calibrated forecasts of conditional correctness given the preceding prefix, a martingale argument (Appendix~\ref{appendix:motivational}) would yield $\prod_{i\leq K} s_i = \P(y \text{ correct} \sep x, \ss_{\leq K})$.
The product of scores then acts as a sequential probability-of-correctness estimate, and the choice $\tau = 1-\alpha$ directly (conservatively) controls risk at level $\alpha$.

In practice, heuristic scores are far from calibrated in this sense, and it is generally impossible to achieve calibrated forecasts that are this strong \citep{van2016calibration, hebert2018multicalibration}. We must therefore choose $\tau$ from data via a procedure that makes \emph{essentially no assumptions on the scorer}; this is the main theoretical subject of this paper, which we return to in \Cref{sec:calibration}.

\subsection{Calibrated Posterior Sampling on a Mixture Prior}
\label{sec:mixture}
Before we turn to the question of calibrating the threshold $\tau$, we note that our discussion so far relies on the assumption that the posterior is well-defined for all $x$ and $\tau$.

Formally, this amounts to requiring $Z^{\tau}_{\llm}(x) = \int_{y \in \calV^*} \Phi^\tau(y \sep x) \d p_{\llm}(y\sep x) > 0$ for all $x \in \calX$ and $\tau \in [0,1]$. Since this would require the ability to generate samples that pass the potential's thresholding mechanism by mapping to thresholds that are arbitrarily close to 1, this is too stringent of an assumption on the scoring mechanism. Post-hoc methods such as that of \citet{mohri:hashimoto} sidestep this issue by treating an all-claims-dropped generation (i.e., outputting the empty set of claims) as an implicit abstention that is always factual. In our setting, $p^\tau_{\llm}$ inherits its support from $\lmprior$, which may not include an explicit abstention path. The language model prior $\lmprior$ may have no practical amounts of mass on any score-passing outputs at all; strict thresholds thus leave no direct route to safe output, prohibiting calibration. To get around this issue, we therefore calibrate not against $\lmprior$ itself but against the \emph{mixture prior}
\begin{equation}
\label{eq:mixture-prior}
    p_\beta(y \sep x)  = (1 - \beta)\, p_{\mathrm{LLM}}(y \sep x) + \beta\, \delta_{\texttt{abstain}}(y),
\end{equation}
where $\delta_{\texttt{abstain}}$ is a point mass on a designated abstention response, $\beta > 0$ is a small pre-specified mass, and we require that the scorer assigns $S(\texttt{abstain} \sep x) = 1$ for all $x \in \calX$.
The resulting posterior then retains a factual \emph{escape valve} at cutoff $\tau$; abstaining is always factual.  Analogously to \eqref{eq:posterior:original}, we define the mixture posterior 
$
    p_\beta^\tau(y \sep x) \propto p_\beta(y\sep x) \Phi^\tau(y \sep x)
$.
Conveniently for what follows, this also makes the conditional normalizing constant well behaved, i.e., $Z^\tau_\beta(x) = \int_{y \in \calV^*} \Phi^\tau(y \sep x) \d p_\beta(y\sep x) \geq \beta$ for all $\tau$. We often drop the subscript $\beta$ and just write $p := p_\beta$ and $p^\tau = p^\tau_\beta$, and similarly $Z^\tau = Z^\tau_\beta$. 
Note that $p^0 = p^0_\beta = p_\beta$ is simply the prior, since thresholding has no effect at $\tau=0$ when $\mathcal{S} \subseteq [0,1].$ 

\subsection{Instantiating the Scoring Function}
\label{sec:instantiating-scoring}
While the validity of our method is agnostic to the accuracy of the scorer, yielding risk-controlling guarantees under very minimal assumptions, the scorer does have an impact on how stringently the system filters out generations
(see Appendix~\ref{appendix:scorer-elicit-vs-logprob} for a scorer ablation).
We follow prior work and elicit scores by asking an external LLM to rate its confidence in each claim on a scale from $0$ to $1$~\citep{mohri:hashimoto}.
Empirically, this does not yield continuously-distributed scores, instead concentrating on a small, discrete set. Following \citet{mohri:hashimoto}, in order to calibrate to arbitrary targets of factuality $1-\alpha$, we therefore add a small random perturbation. For an elicited score $\tilde s_i$, we set the final score as $s_i \sim U[\max(0, \tilde s_i - \gamma), \min(1, \tilde s_i + \gamma)]$, where $\gamma = 0.01$ is a small constant, the exact choice of which has no significant downstream effects, so long as it is on the order of the granularity of the elicited scores\footnote{LLM scorers tend to return scores that are capped at one or two decimal points.}.

While having flexibility on the choice of scorer is convenient, it also presents a methodological challenge. Since \(p^\tau(\cdot \mid x)\) depends on both \(s\) and \(\tau\), calibration via fixed-sequence testing~\citep{Angelopoulos2021LearnTT} would require specifying a sensible grid of thresholds $\tau$ to consider in advance, and drawing a new calibration set for each \((s,\tau)\) pair --- a prohibitively expensive process that would require to be repeated whenever $s$ is modified.
Even setting computational burden aside, how to pick such a grid is not clear a priori, especially because the spread of the scores is unknown and could be concentrated within some small range, making it difficult to select a sensible discretization a-priori.
Instead,
we will aim to develop an \emph{off-policy} calibration routine which only requires samples from the (mixture) prior,
eliminating the need for expensive on-policy data collection altogether.

\section{Methodology: Calibrating the Threshold $\tau$}

\label{sec:calibration}

Having defined the potential family $\Phi^\tau$ and the mixture prior $p$, we turn to the central question of the paper: how to pick $\tau$ so that posterior sampling controls risk at level $\alpha$, using \emph{only} samples from the base model. We will have access to a calibration set of prompts $\mathcal{C} = \setof{x_1,\ldots,x_n}$ from $\covdist$, and assume that $x_1,\ldots,x_n,\xtest$ are exchangeable. We present our approach in 3 steps: (1) derivation of an idealized calibration rule under known normalization constant; (2) construction of a tractable empirical approximation; and (3) proof that this yields risk control for the resulting posterior sampler(s).

\xhdr{Known Normalization Constant} Assume that we have access to an oracle for the \emph{conditional} normalization constant $Z^\tau(x) = \int_{y \in \calV^*} \Phi^\tau(y \sep x) \d p(y\sep x)$. If that was the case, then we could estimate the posterior risk using importance reweighting. To see this, note first that from Section~\ref{section:posterior}: $\frac{p^\tau(y \sep x)}{p(y\sep x)} = \frac{\Phi^\tau(y \sep x)}{Z^\tau(x)}$. So, instead of sampling directly from the posterior $p^\tau$, we can i.i.d.\ sample outputs $y_i$ from the mixture prior $p(\cdot \sep x_i)$, keep only samples whose score exceeds $\tau$, and then re-weigh their losses to account for the fact that they were sampled from the prior rather than the posterior.
To obtain a finite-sample guarantee, we add a conservative correction term $1 /(\beta (n+1))$ for the unseen test point and conservatively monotonize the objective by lower bounding the likelihood ratio's denominator. This gives an \emph{idealized} calibration rule:
\newcommand{\idealizedrule}{\hat{\tau}_{\mathrm{id}}}
\begin{align}
\idealizedrule = \inf\bigg\{\tau \, \bigg| \, \frac{1}{n+1} \sum_{i=1}^{n} \frac{\Phi^\tau(y_i \sep x_i)}{Z^{S(y_i\sep x_i)}(x_i)} \ell(x_i,y_i)  +  \frac{1}{ \beta(n+1)}  \leq \alpha \bigg\}. \label{eq:tauselection:idealizedcase}
\end{align}
We briefly comment on this calibration rule. 
Note that since $\tau \mapsto Z^{\tau}(x_i)$ is a \emph{monotone} non-increasing\footnote{This is because as $\tau$ increases, the probability mass of completions that pass $\tau$ can only decrease.} map and by the thresholding nature of $\Phi^\tau$, each term in the sum is an upper bound on $\frac{\Phi^\tau(y_i | x_i)}{Z^\tau(x_i)}\ell(x_i,y_i)$. Fixing the denominator to not depend on $\tau$ has an additional crucial property: the ratio is now monotone in $\tau$. Therefore, the LHS of the constraint in \eqref{eq:tauselection:idealizedcase} inherits this \emph{monotonicity}, yielding our first risk control guarantee. 

\begin{proposition}
\label{prop:idealized}
    If $x_1,\ldots,x_{n}, \xtest$ are exchangeable, samples from $\covdist$ and we have access to $Z^\tau(x_i)$ for all $\tau \in [0,1]$, then computing $\idealizedrule$ according to~\Cref{eq:tauselection:idealizedcase} and sampling the test point from the \emph{true} posterior $\ytest \sim p^{\idealizedrule}(\cdot \sep \xtest)$ satisfies $
    \E[\ell(\xtest, \ytest) ] \leq \alpha,$ 
    where the randomness is over the draw of the calibration prompts, the calibration responses, and the draw from the test-time posterior.
\end{proposition}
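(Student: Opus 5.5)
The plan is a conformal risk control argument in the style of \citet{Angelopoulos2021LearnTT}. We add a fictitious $(n+1)$-st calibration point for the test prompt, show that the idealized rule \eqref{eq:tauselection:idealizedcase} is at least as large as the corresponding oracle threshold computed on all $n+1$ points, and then use exchangeability together with the importance-weighting identity $p^\tau(y\sep x)/p(y\sep x)=\Phi^\tau(y\sep x)/Z^\tau(x)$.

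Throughout we assume $\ell\in[0,1]$, which is implicit in the correction term. For $i\le n$ set
$$L_i(\tau)=\frac{\Phi^\tau(y_i\sep x_i)}{Z^{S(y_i\sep x_i)}(x_i)}\,\ell(x_i,y_i).$$
Set $x_{n+1}=\xtest$, and let $y_{n+1}\sim p(\cdot\sep\xtest)$ be a fresh prior draw, independent of everything else. Define $L_{n+1}$ in the same way.

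\textbf{Step 1 (pointwise properties).} Each $L_i(\tau)$ is non-increasing in $\tau$, because the denominator does not depend on $\tau$ and $\Phi^\tau=\indicator{S\ge\tau}$ is non-increasing in $\tau$. Since $Z^\tau\ge\beta$ (the abstention mass of \Cref{sec:mixture}), we have $0\le L_i\le 1/\beta$. Finally, whenever $\Phi^\tau(y\sep x)=1$ we have $S(y\sep x)\ge\tau$, so $Z^{S(y\sep x)}(x)\le Z^\tau(x)$ by monotonicity of $Z$. This gives
$$L_i(\tau)\ \ge\ \frac{\Phi^\tau(y_i\sep x_i)}{Z^\tau(x_i)}\,\ell(x_i,y_i).$$

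\textbf{Step 2 (oracle threshold).} Define $\tau'=\inf\{\tau:\frac{1}{n+1}\sum_{i=1}^{n+1}L_i(\tau)\le\alpha\}$. Because $L_{n+1}\le 1/\beta$, any $\tau$ feasible for \eqref{eq:tauselection:idealizedcase} is feasible for the full sum, so $\tau'\le\idealizedrule$. Monotonicity then gives $L_{n+1}(\idealizedrule)\le L_{n+1}(\tau')$.

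The threshold $\tau'$ is a symmetric function of the $n+1$ pairs $(x_i,y_i)$. These pairs are exchangeable, since the prompts are exchangeable and each $y_i$ is drawn from $p(\cdot\sep x_i)$ independently. Hence
$$\E[L_{n+1}(\tau')]=\E\Big[\frac{1}{n+1}\sum_{i=1}^{n+1}L_i(\tau')\Big]\le\alpha.$$

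\textbf{Step 3 (back to the posterior).} Condition on the calibration data and on $\xtest$. The threshold $\idealizedrule$ is then fixed and independent of $y_{n+1}$. Using Step 1,
$$\E_{\ytest\sim p^{\idealizedrule}}[\ell(\xtest,\ytest)]=\E_{y\sim p(\cdot\sep\xtest)}\Big[\tfrac{\Phi^{\idealizedrule}(y\sep\xtest)}{Z^{\idealizedrule}(\xtest)}\ell(\xtest,y)\Big]\le\E_{y_{n+1}}[L_{n+1}(\idealizedrule)].$$
Taking expectations and chaining with Step 2 yields the claim.

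\textbf{Expected obstacle.} The main difficulty is the boundary behaviour of the infimum. Since $\Phi^\tau=\indicator{S\ge\tau}$, the map $\tau\mapsto\sum_i L_i(\tau)$ is left-continuous, so the infimum need not be attained and the inequality at $\tau'$ holds only as a right limit.

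\begin{itemize}
\item The first remedy to try is to use the fact that, thanks to the random jitter of \Cref{sec:instantiating-scoring}, the scores are almost surely distinct and continuously distributed. Then ties occur with probability zero and the right limit coincides with the value at $\tau'$ almost surely. Alternatively, one can define both thresholds via the right limits $L_i(\tau^+)$ and run the same sandwich argument.
\item One must also handle the case where the feasible set is empty. There the convention is to take the most stringent threshold ($\tau=1$), under which only outputs with $S=1$, including \texttt{abstain}, survive; bounding the risk in this degenerate case may need a separate short argument.
\end{itemize}
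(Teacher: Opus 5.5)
Your argument is the paper's proof essentially step for step. It has the same ingredients: fixed-denominator weights that are non-increasing, bounded by $1/\beta$ and dominate the true importance weights; a hallucinated prior draw $y_{n+1}$; the oracle threshold with $\tau'\le\hat\tau_{\mathrm{id}}$; exchangeability; and the likelihood-ratio identity. The boundary issue you flag is real, and your first bullet does not resolve it. The paper's own proof also passes over it, by asserting $\frac{1}{n+1}\sum_{i=1}^{n+1}L_i(\tau')\le\alpha$ directly from the definition of the oracle, which is the same inequality that ends your Step~2. In fact this inequality \emph{fails} whenever $\tau'>0$. The function $f(\tau)=\frac{1}{n+1}\sum_{i=1}^{n+1}L_i(\tau)$ is a left-continuous step function whose only jumps sit just to the right of scores $S_k=S(y_k\sep x_k)$ with $\ell(x_k,y_k)=1$. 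So the feasible set is $(S_k,1]$ for one such $k$, $\tau'=S_k$, and $f(\tau')>\alpha$. This is why your first bullet does not work as worded. Continuous scores rule out ties, but they do not remove this jump, which is produced by the point whose own score equals $\tau'$ (possibly the test point itself).

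Your second bullet is the right repair, but it needs continuity at one specific place. Put $\tilde L_i(\tau)=L_i(\tau^+)=\frac{\ell(x_i,y_i)}{Z^{S_i}(x_i)}\mathbf{1}\{S_i>\tau\}$. Because $\sum_i\tilde L_i(\tau)=\sup_{\tau''>\tau}\sum_i L_i(\tau'')$, both thresholds are unchanged, but the infimum is now attained. Step~2 then runs verbatim and gives $\E[\tilde L_{n+1}(\hat\tau_{\mathrm{id}})]\le\E[\tilde L_{n+1}(\tau')]\le\alpha$. Step~3, however, bounds the test risk by $\E[L_{n+1}(\hat\tau_{\mathrm{id}})]$, so you must also show $L_{n+1}(\hat\tau_{\mathrm{id}})=\tilde L_{n+1}(\hat\tau_{\mathrm{id}})$ almost surely. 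This holds for three reasons: $\hat\tau_{\mathrm{id}}$ is a function of the calibration data only; $y_{n+1}$ is a fresh draw from $p(\cdot\sep\xtest)$; and the uniform jitter on elicited scores makes the completion score of every loss-bearing output atomless (the only atom, \texttt{abstain} at score $1$, has zero loss). This step cannot be dropped. Suppose every non-abstaining output had the same score $s_0<1$ and loss $1$, and $1/(\beta(n+1))\le\alpha<1-\beta$. Then $(s_0,1]$ is always feasible, so $\hat\tau_{\mathrm{id}}\le s_0$, every output passes $\Phi^{\hat\tau_{\mathrm{id}}}$, and the test risk is $1-\beta>\alpha$. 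The empty-feasible-set case, by contrast, needs nothing extra. With $\hat\tau_{\mathrm{id}}:=1$ you still have $\tau'\le 1$ because $\tilde L_i(1)=0$, and $L_{n+1}(1)=0$ almost surely, so the same sandwich applies.
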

The proof is deferred to Appendix~\ref{appendix:proof:idealized}.

\xhdr{Unknown $Z^\tau(x)$}
In practice, computing $Z^\tau(x)$ is intractable\footnote{Indeed, computing it requires summing/integrating over all possible completions $y$ that the LM could generate.}, and the rule \eqref{eq:tauselection:idealizedcase} cannot be evaluated.
A natural idea is to estimate, at each calibration prompt, the conditional posterior risk $\E_{y \sim p^\tau(\cdot \sep x_i)}[\ell(x_i, y)]$ from base-model samples.
To this end, we draw $M$ i.i.d.\ responses $y_{ij} \sim \lmprior(\cdot \sep x_i)$ (aka \emph{``particles''}) for each $i \in [n]$ and form the empirical (mixture) prior
\begin{equation}
    \label{eq:empirical-prior}
    \hat p(y \sep x_i, T_i) = (1-\beta)\,\frac{1}{M}\sum_{j \in [M]} \delta_{y_{ij}}(y) + \beta\, \delta_{\texttt{abstain}}(y),
\end{equation}
where we use the shorthand $T_i = (y_{i1}, \ldots, y_{iM})$ and the abstention atom is attached analytically with mass $\beta$ rather than sampled. This stratification matches the abstention weight to that of the population mixture prior $p_\beta$ but, crucially, \emph{ensures that there is deterministic abstention mass,  making the plug-in estimator well-defined}.
The induced \emph{empirical posterior} is
\begin{align}
    \hat p^\tau(y \sep x_i, T_i) = \frac{\Phi^\tau(y \sep x_i)\, \hat p(y \sep x_i, T_i)}{\hat{Z}^\tau(x_i, T_i)}, \quad
    \hat Z^\tau(x_i, T_i) = \beta + (1-\beta)\,\frac{1}{M}\sum_{j \in [M]} \Phi^\tau(y_{ij} \sep x_i).     \label{eq:empirical-posterior}
\end{align}
The explicit abstention path implies $\hat Z^\tau(x_i, T_i) \geq \beta$ which makes $\hat p^\tau$ well-defined for all $\tau \in [0,1]$.

\xhdr{Calibrating against the empirical posterior}
 The definition of $\hat{p}$ signifies that we are essentially stratifying the sampling from both modes of the mixture prior. This will naturally lead to a calibration routine that does \emph{not} control the risk under the population posterior $p^\tau$, but instead under the empirical posterior $\hat p^\tau$. To this end, given $T_i$, we consider the conditional risk under $\hat p^\tau$, namely $\E_{y \sim \hat p^\tau(\cdot \sep x_i, T_i)}[\ell(x_i, y)]$, and write the shorthand 
\begin{equation}
    \label{eq:empirical-conditional-risk}
    \hat H_i(\tau) = \frac{(1-\beta)\frac{1}{M}\sum_{j \in [M]} \Phi^\tau(y_{ij} \sep x_i)\,\ell(x_i, y_{ij})}{\beta + (1-\beta)\frac{1}{M}\sum_{j \in [M]} \Phi^\tau(y_{ij} \sep x_i)},
\end{equation}
because the abstention atom contributes $\ell(x, \texttt{abstain}) = 0$ to the numerator and can be omitted.
$\hat H_i(\tau)$ is a deterministic function of the calibration data alone, with no further randomness over a sampler: calibration simply inspects the support of $\hat p^\tau$, it does not draw from the corresponding empirical distribution.

As a step function of $\tau$\footnote{To see this, note that the only way that $\tau$ affects $\hat{H}_i(\tau)$ is through the score indicators $\Phi^{\tau}(y_{ij} | x_i) = \indicator{ S(y_{ij} | x_i) \geq \tau}$. As such, $\hat{H}_i(\tau)$ only changes when $\tau$ crosses one of the sampled scores $S(y_{ij} | x_i)$.}, $\hat H_i$ need not be monotone: pruning a correct particle at a slightly higher threshold while leaving incorrect particles to survive can transiently raise the average loss among the accepted particles. We restore monotonicity sample-pathwise via the monotone upper envelope:
\begin{equation}
    \label{eq:monotonized}
    \bar H_i(\tau) = \sup_{\tau' \geq \tau} \hat H_i(\tau'),
\end{equation}
a non-increasing pointwise upper bound on $\hat H_i$. Monotonization is the lever that makes conformal-risk-control style arguments work. Empirically (see Sec.~\ref{sec:experiments}), $\hat H_i$ is close to monotone for moderate $M$, and $\bar H_i$ introduces little slack.

The full procedure is given in \Cref{algo:crc}. We can give the following conformal guarantee, which approaches a guarantee on a sample from the population posterior as $M$ grows (c.f. Corollary~\ref{corollary:limsup}).

\begin{algorithm}[t!]
\caption{Off-policy Conformal Risk Control with Path Monotonization}
\label{algo:crc}
\begin{algorithmic}[1]
\Require Calibration prompts $x_1,\ldots,x_n$; particle budget $M$; abstention mass $\beta$; target level $\alpha$
\For{$i = 1,\ldots,n$}
    \For{$j = 1,\ldots,M$}
        \State Sample $y_{ij} \sim \lmprior(\cdot \sep x_i)$, score it to obtain $\Phi^\tau(y_{ij} \sep x_i)$, and annotate $\ell(x_i, y_{ij})$
    \EndFor
\EndFor
\State Form $\hat H_i(\tau)$ via \eqref{eq:empirical-conditional-risk} and $\bar H_i(\tau)$ via \eqref{eq:monotonized}
\State \Return $\displaystyle
\hat\tau = \inf\!\setof{\tau \bigg| \frac{1}{n+1}\sum_{i=1}^{n} \bar H_i(\tau) + \frac{1}{n+1} \leq \alpha}.$
\end{algorithmic}
\end{algorithm}

\begin{theorem}[Finite-sample risk control for the empirical posterior]
\label{thm:main}
Let prompts $x_1, \ldots, x_n, \xtest$ be exchangeable, and assume that conditionally, all calibration and test particles are drawn independently with $y_{ij} \sim \lmprior(\cdot | x_i)$ and $(\ytest)_j \sim \lmprior(\cdot | \xtest)$. Let $\hat\tau$ be the output of \Cref{algo:crc}. If $\ytest \sim \hat p^{\hat\tau}(\cdot | \xtest, T_{\mathrm{test}})$, then $\E[\ell(\xtest, \ytest)] \leq \alpha$, 
where the expectation is over the calibration prompts and particles, test prompt and particles, and the draw from the test-time posterior.
\end{theorem}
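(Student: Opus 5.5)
This is a standard conformal risk control argument (in the style of Angelopoulos et al.), run on the augmented data points $Z_i = (x_i, T_i)$ for $i \in [n+1]$, where $(x_{n+1}, T_{n+1}) = (\xtest, T_{\mathrm{test}})$.

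\textbf{Step 1 (reduce to conditional risk).} First I integrate out the final draw $\ytest \sim \hat p^{\hat\tau}(\cdot \sep \xtest, T_{\mathrm{test}})$. Conditional on all calibration data and on $(\xtest, T_{\mathrm{test}})$, the quantity $\hat\tau$ is fixed. By the definition \eqref{eq:empirical-conditional-risk}, which uses $\ell(x,\texttt{abstain})=0$, the conditional expected loss is exactly $\hat H_{n+1}(\hat\tau) \le \bar H_{n+1}(\hat\tau)$. Hence it suffices to show $\E[\bar H_{n+1}(\hat\tau)] \le \alpha$.

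\textbf{Step 2 (exchangeability and properties of $\bar H_i$).} The prompts are exchangeable, and given the prompts the particle sets $T_i$ are drawn independently from $\lmprior(\cdot\sep x_i)$. Therefore the pairs $(x_i,T_i)_{i=1}^{n+1}$ are exchangeable. Each $\bar H_i$ is the same deterministic functional of $(x_i,T_i)$, so the random functions $\bar H_1,\dots,\bar H_{n+1}$ are exchangeable. I also record three facts.
\begin{itemize}
\item Each $\bar H_i$ is non-increasing by construction \eqref{eq:monotonized}.
\item Each $\bar H_i$ takes values in $[0,1]$, assuming $\ell \in [0,1]$, since $\hat H_i$ is a sub-average of losses.
\item $\bar H_i(1)$ is small: at $\tau$ above every sampled score, $\hat H_i = 0$. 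At $\tau = 1$ only particles with $S=1$ survive.
\end{itemize}
The last point needs care. I will instead take the infimum over $\tau \in [0, 1^+]$, or use that $\hat\tau$ may exceed all scores, so that $\bar H_i(\tau_{\max}) = 0$ and the feasible set is nonempty whenever $1/(n+1) \le \alpha$. If the feasible set is empty, I set $\hat\tau$ to that maximal value. Finally, right-continuity of the step functions $\bar H_i$, which holds because the indicators $\indicator{S \ge \tau}$ make $\hat H_i$ left-continuous, must be checked so that the infimum is attained. If it is not attained, I will argue with $\hat\tau$ replaced by a limit from the right and use monotonicity.

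\textbf{Step 3 (oracle threshold).} Define the symmetric threshold
\[
\tau^* = \inf\Big\{\tau : \tfrac{1}{n+1}\textstyle\sum_{i=1}^{n+1} \bar H_i(\tau) \le \alpha\Big\}.
\]
Because $\bar H_{n+1}(\tau) \le 1$, the constraint defining $\hat\tau$ upper bounds the full $(n+1)$-sum. Hence every $\tau$ feasible for \Cref{algo:crc} is feasible for $\tau^*$, giving $\tau^* \le \hat\tau$. By monotonicity, $\bar H_{n+1}(\hat\tau) \le \bar H_{n+1}(\tau^*)$. Now $\tau^*$ is a symmetric function of the exchangeable functions $\bar H_1,\dots,\bar H_{n+1}$. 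Therefore
\[
\E[\bar H_{n+1}(\tau^*)] = \E\Big[\tfrac{1}{n+1}\textstyle\sum_i \bar H_i(\tau^*)\Big] \le \alpha,
\]
where the inequality uses the attainment of the infimum from Step 2.

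\textbf{Main obstacle.} The hardest part is the technical justification in Step 2. It has two pieces: the infimum is attained, via right-continuity of the non-increasing step functions, and the feasible set is nonempty at the top end. For the latter I will use that $\bar H_i$ vanishes for $\tau$ above all particle scores (or handle this with the conventional $\hat\tau = +\infty$ meaning always abstain, which has zero loss). A secondary concern is verifying the exchangeability of the $(x_i,T_i)$ pairs from the stated conditional independence of particles. This follows by writing the joint law as the prompt law times a product kernel that is invariant under permutations.
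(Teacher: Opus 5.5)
Your Steps 1--3 reproduce the paper's proof: Lemma~\ref{lemma:factanalogue} together with $\hat H_{n+1}\le\bar H_{n+1}$, then Lemma~\ref{lemma:tau-ordering-estimated}, the monotone transfer, and the exchangeability average. However, the technical point you single out is resolved incorrectly, and it is a real gap. The paper's own Step~3 glosses over it too, when it asserts $\frac{1}{n+1}\sum_{i=1}^{n+1}\bar H_i(\tau^\star)\le\alpha$ ``by definition''. Since $\Phi^\tau=\indicator{S\ge\tau}$, each $\hat H_i$, and hence each $\bar H_i$, is constant on intervals $(a,b]$ between consecutive particle scores. These functions are therefore \emph{left}-continuous, as your own parenthetical notes, not right-continuous. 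So the feasible set is in general open on the left, and the $(n+1)$-average at $\tau^\star$ equals its left limit, which exceeds $\alpha$. Your fallback (``use monotonicity'') cannot rescue this: when $\hat\tau=\tau^\star$ is a jump point of $\bar H_{n+1}$, monotonicity gives you the larger value.

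In fact the statement is false without an extra hypothesis. Take $M=1$, a deterministic scorer $S\equiv\tfrac12$, and $\ell_i\sim\mathrm{Bernoulli}(q)$. Then $\bar H_i(\tau)=(1-\beta)\ell_i\indicator{\tau\le 1/2}$, and every $\tau>\tfrac12$ is feasible once $\alpha(n+1)\ge 1$. So $\hat\tau\le\tfrac12$ always, the test particle always survives, and the risk is $(1-\beta)q$. For example, $\beta=0.1$ and $q=\tfrac12$ give $0.45>\alpha=0.2$.

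The missing ingredient is that completion scores are atomless. The paper's uniform jitter guarantees this, but the theorem does not state it. With it, argue with right limits:
\begin{itemize}
\item For every $t>\tau^\star$, monotonicity gives $\frac{1}{n+1}\sum_i\bar H_i(t)\le\alpha$. Hence $\frac{1}{n+1}\sum_i\bar H_i(\tau^\star+)\le\alpha$, and your exchangeability step yields $\E[\bar H_{n+1}(\tau^\star+)]\le\alpha$.
\item If $\hat\tau>\tau^\star$, then $\bar H_{n+1}(\hat\tau)\le\bar H_{n+1}(\tau^\star+)$ by monotonicity.
\item If $\hat\tau=\tau^\star$, you need $\bar H_{n+1}$ to be right-continuous at $\hat\tau$, i.e.\ no score in $T_{n+1}$ equals $\hat\tau$. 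This holds almost surely: conditionally on the prompts and $T_1,\dots,T_n$, $\hat\tau$ is fixed, while the test scores are drawn independently from an atomless law.
\end{itemize}
Alternatively, you can keep arbitrary scorers but deploy with the strict potential $\indicator{S>\hat\tau}$, whose conditional risk is $\hat H_{n+1}(\hat\tau+)\le\bar H_{n+1}(\tau^\star+)$.

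Your other concerns, nonemptiness of the feasible set and exchangeability of the pairs $(x_i,T_i)$, are fine as you handle them.
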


Given base-model samples, per-claim scores, and factuality annotations on the calibration set, \Cref{algo:crc} returns a threshold $\hat\tau$ which controls the risk of the empirical-posterior sampler.
The proof (in Sec.~\ref{appendix:proof:theorem32}), follows a standard conformal-risk-control template: $\bar H_i$ is a monotone envelope of a plug-in estimator for the empirical-posterior risk, exchangeability of the calibration tuples bounds an ``oracle'' threshold $\tau^\star$ at level $\alpha$, and monotonicity transfers this bound from $\tau^\star$ to $\hat\tau$.

\xhdr{From empirical to population posterior}
Calibrating against $\hat p^{\hat\tau}$ raises a natural question: does the guarantee transfer to the population posterior $p^{\hat\tau}$? Because $\hat\tau$ is data-dependent, pointwise convergence at fixed $\tau$ is not enough; we need the convergence to be uniform in $\tau$. Luckily, the threshold-indexed family $\{\Phi^\tau(\cdot \sep x) : \tau \in [0,1]\}$ is a one-dimensional threshold class and hence Glivenko-Cantelli, giving us the following result.

\begin{lemma}[Uniform asymptotic equivalence of conditional risks]
\label{lemma:glivenko}
Fix abstention mass $\beta > 0$, and let $T_M = (y_1, \ldots, y_M)$ with $y_j \stackrel{\text{i.i.d.}}{\sim} \lmprior(\cdot \sep x)$. Then for $\covdist$-a.e.\ $x$,
\begin{equation*}
    \sup_{\tau \in [0,1]} \abs{\hat \E_{y \sim \hat p^\tau(\cdot \sep x, T_M)}[\ell(x, y)] - \E_{y \sim p^\tau(\cdot \sep x)}[\ell(x, y)]}\stackrel{\text{a.s.}}{\rightarrow} 0.
\end{equation*}
\end{lemma}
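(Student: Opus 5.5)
We write both conditional risks as ratios of threshold-indexed averages and apply a uniform law of large numbers to numerator and denominator separately. Fix $x$ and set $N(\tau) = \E_{y \sim \lmprior(\cdot \sep x)}[\Phi^\tau(y \sep x)\,\ell(x,y)]$ and $D(\tau) = \E_{y \sim \lmprior(\cdot \sep x)}[\Phi^\tau(y \sep x)]$. Because $\ell(x,\texttt{abstain})=0$ and $S(\texttt{abstain} \sep x)=1$, the population posterior risk is
\begin{equation*}
R(\tau) = \frac{(1-\beta) N(\tau)}{\beta + (1-\beta) D(\tau)}.
\end{equation*}
The empirical quantity $\hat H(\tau)$ of \eqref{eq:empirical-conditional-risk} has the same form, with $N, D$ replaced by the empirical means $\hat N_M(\tau) = \frac1M\sum_j \Phi^\tau(y_j \sep x)\ell(x,y_j)$ and $\hat D_M(\tau) = \frac1M\sum_j \Phi^\tau(y_j\sep x)$.

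The key steps are as follows.

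\textbf{(i) Uniform convergence of the denominator.} Since $\Phi^\tau(y\sep x) = \indicator{S(y\sep x)\ge \tau}$, $\hat D_M(\tau)$ is one minus the empirical CDF of the scalar random variable $S(y\sep x)$, evaluated at $\tau^-$. The classical Glivenko--Cantelli theorem then gives $\sup_\tau |\hat D_M(\tau)-D(\tau)|\to 0$ a.s.

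\textbf{(ii) Uniform convergence of the numerator.} Here I assume $\ell$ is bounded, say $\ell\in[0,1]$ (or at least $\E|\ell(x,y)|<\infty$ for a.e.\ $x$; this is the integrability condition I would state explicitly). The class $\{y\mapsto \indicator{S(y)\ge\tau}\ell(x,y)\}_\tau$ is then a single envelope function times a VC class of one-dimensional thresholds, so it is Glivenko--Cantelli. To keep the argument self-contained, I would instead prove it directly by the standard bracketing argument behind Glivenko--Cantelli. The map $\tau\mapsto N(\tau)$ is monotone and bounded by $\E\ell$. For any $\epsilon$, choose finitely many grid points $\tau_0<\dots<\tau_k$ so that consecutive brackets, using both left and right limits to handle atoms, differ by at most $\epsilon$ in $L^1$. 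Apply the SLLN at the finitely many grid points and their left-limit versions. Monotonicity in $\tau$ of both $\hat N_M$ and $N$ then sandwiches every intermediate $\tau$ to within $\epsilon$.

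\textbf{(iii) Transfer to the ratio.} The map $(a,b)\mapsto (1-\beta)a/(\beta+(1-\beta)b)$ is Lipschitz on $a\in[0,C]$, $b\in[0,1]$, because the denominator is at least $\beta>0$. This is exactly where the abstention mass is used. Hence
\begin{equation*}
\sup_\tau|\hat H(\tau)-R(\tau)| \le \frac{1-\beta}{\beta}\sup_\tau|\hat N_M-N| + \frac{(1-\beta)^2 C}{\beta^2}\sup_\tau|\hat D_M-D| \to 0 \text{ a.s.}
\end{equation*}
Because this holds for every $x$ satisfying the integrability condition, it holds for $\covdist$-a.e.\ $x$.

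The main obstacle is step (ii), namely uniformity for the loss-weighted class when $S(y\sep x)$ has atoms. The brackets must use closed and open thresholds at atoms, and integrability of $\ell$ is needed to make the bracket widths small. Once that is handled, the lower bound $\beta$ on the normalizer makes step (iii) routine. Without $\beta$, near $\tau$ with $D(\tau)\approx 0$ the ratio would be ill-conditioned and uniformity would fail.
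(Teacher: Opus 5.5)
Your proposal is correct and takes the paper's route: uniform (Glivenko--Cantelli, via bracketing) convergence of the loss-weighted and unweighted threshold averages, then the same add-and-subtract ratio bound using $Z^\tau \geq \beta$, which gives the paper's constants $\frac{1-\beta}{\beta}$ and $\frac{(1-\beta)^2}{\beta^2}$ (with $C=1$ for the $0$--$1$ loss). The one real difference is that you handle atoms of $S(y \mid x)$ with open/closed-threshold brackets, whereas the paper's bracketing lemma assumes continuous scores (enforced by the added noise); your version is therefore slightly more general, for instance it also covers deterministic or discretely valued scorers without perturbation.
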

Note that this is an almost surely statement about the draw of the calibration particles $T_M$, as $M \rightarrow \infty$, uniformly over all $\tau \in [0,1]$. Combined with \Cref{thm:main}, this gives us the following corollary.
\begin{corollary}
\label{corollary:limsup}
As the number of calibration particles grows, for $\ytest \sim p^{\hat\tau_M}(\cdot \sep \xtest)$ we have $\limsup_{M \to \infty} \E[\ell(\xtest, \ytest)] \leq \alpha$.
\end{corollary}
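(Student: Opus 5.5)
We combine \Cref{thm:main} with \Cref{lemma:glivenko} through a sandwich argument. Let $R(x,\tau) = \E_{y\sim p^\tau(\cdot\sep x)}[\ell(x,y)]$ be the population posterior risk. Let $\hat R_M(x,T,\tau)$ be the empirical-posterior risk, which has the form \eqref{eq:empirical-conditional-risk} built from $T$. Write $\hat\tau_M$ for the output of \Cref{algo:crc} with $M$ calibration particles per prompt.

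Conditioning on $\hat\tau_M$, which depends only on the calibration data and is independent of the test draw, gives
\[
\E[\ell(\xtest,\ytest)] = \E[R(\xtest,\hat\tau_M)].
\]
Draw auxiliary test particles $T_{\mathrm{test},M}\sim \lmprior(\cdot\sep\xtest)^{\otimes M}$, independent of everything else. Then
\[
\E[R(\xtest,\hat\tau_M)] \le \E[\hat R_M(\xtest,T_{\mathrm{test},M},\hat\tau_M)] + \E\Big[\sup_{\tau\in[0,1]}\big|\hat R_M(\xtest,T_{\mathrm{test},M},\tau) - R(\xtest,\tau)\big|\Big].
\]
The first term is the risk of sampling from $\hat p^{\hat\tau_M}(\cdot\sep\xtest,T_{\mathrm{test},M})$. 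The hypotheses of \Cref{thm:main} hold, since the test particles are conditionally i.i.d.\ and independent of calibration, so this term is at most $\alpha$ for every $M$. Taking the supremum over $\tau$ is what lets us handle the data-dependent $\hat\tau_M$.

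It remains to show the second term $\E[D_M]$ tends to zero, where $D_M$ is the supremum inside it. Realize the particles as the first $M$ entries of one infinite i.i.d.\ sequence. \Cref{lemma:glivenko} then says $D_M\to 0$ almost surely for $\covdist$-a.e.\ $\xtest$, so by Fubini $D_M\to0$ almost surely jointly over $\xtest$ and the particles. To pass to expectations we need domination. I will assume, as is implicit in the setup where $\ell$ measures factuality and $\alpha\in(0,1)$, that $\ell$ is bounded, say in $[0,1]$. Both posterior risks are averages of $\ell$ under probability measures, so $D_M\le 1$, and dominated convergence gives $\E[D_M]\to 0$.

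Therefore $\limsup_M \E[\ell(\xtest,\ytest)] \le \alpha + \lim_M \E[D_M] = \alpha$.

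The main obstacle is the last step. We need to couple the almost-sure statement of \Cref{lemma:glivenko}, which holds for fixed $x$ along a particle sequence, with the joint randomness, and justify the interchange of limit and expectation. If $\ell$ were unbounded, I would first try uniform integrability of $\ell(x,y)/\beta$ under $p^\tau$, using that $\hat Z\ge\beta$ and $Z\ge\beta$. This bounds both risks by $\beta^{-1}$ times prior-type moments of $\ell$, which would give an integrable dominating function under $\E\,\ell<\infty$. A secondary check is measurability of the supremum over $\tau$. This follows because both $\hat R_M$ and $R$ are monotone-step or right-continuous in $\tau$, so the supremum can be taken over rationals together with the finitely many sampled scores.
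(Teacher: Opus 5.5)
Your proposal is correct and follows essentially the same route as the paper: bound $R(\xtest,\hat\tau_M)$ by the empirical-posterior risk at $\hat\tau_M$, which is at most $\alpha$ by \Cref{thm:main}, plus the expected uniform deviation $\E[\sup_{\tau}|\hat R_M - R|]$, which vanishes by \Cref{lemma:glivenko} and bounded convergence since $\ell\in[0,1]$. Your explicit auxiliary test particles, the prefix coupling across $M$, and the Fubini step make rigorous what the paper leaves implicit (one small slip: $\tau\mapsto\Phi^\tau$ is left-continuous rather than right-continuous, which does not affect your measurability argument).
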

In other words, the calibration guarantee transfers asymptotically to any deployment sampler that targets the population posterior. This means that we achieved the goal we set out to, modulo error that shrinks as $M$ grows. We also point out that we need not necessarily enter the asymptotic regime for the finite-particle, stratified way of constructing the empirical posterior to make sense. Different ways to sample from the posterior are possible that still exactly or approximately inherit the finite sample guarantee in Theorem~\ref{thm:main}. For instance, Sequential Monte Carlo steering \citep{lew2023sequential,zhao2024probabilistic,loula2025syntactic} fixes the initial number of particles, in which case our method amounts to extending the proposal distribution with an appropriately weighted abstention particle; see \Cref{sec:related,sec:conclusion} for further discussion on efficient sampling and particle filtering methods.

\section{Experiments}
\label{sec:experiments}
We now empirically evaluate our framework's validity (that is, whether it controls the level of risk at test-time) and utility (whether the resulting generations are useful).
We focus on two distinct case studies: one within the hallucination-prone task of open-ended biography generation, and one within mathematical reasoning where inter-claim dependencies are stronger.

\xhdr{Datasets and models}
For the biography generation,
we use 682 entities from the FActScore dataset \citep{Min2023FActScoreFA}; 
for mathematical reasoning, we use the standard test set of 500 problem instances from MATH \citep{Hendrycks2021MeasuringMP}.
We average over 10 random splits of the data, with 50 instances used as the test set and the remaining used for calibration; results are reported as the mean over the splits, with shaded areas showing approximate 95\% CIs using the studentized Wald interval.
For generation we use \texttt{Llama-3.3-70B-Instruct}\footnote{To reduce cost, we use the (4-bit weights, 16-bit activations)-quantized variant provided by RedHatAI on \href{https://huggingface.co/RedHatAI/Llama-3.3-70B-Instruct-quantized.w4a16}{\texttt{HuggingFace}}.} \citep{grattafiori2024llama}, a model with intermediate capacity on these tasks.
We serve the model on a single NVIDIA RTX A6000 with 48GB of VRAM, taking $\sim$30s per prompt to generate answers.
To elicit \emph{scores}, we use OpenAI's GPT-4o, a larger and more capable (but still unreliable) estimator of truth; App.~\ref{appendix:scorer-elicit-vs-logprob} compares against a raw token log-probability scorer and the cheaper GPT-4o-mini.
Finally, to ensure that the per-claim \emph{labels} are as accurate as possible, we label the data using GPT-5.4; for FActScore, we include a copy of the entity's Wikipedia article in the prompt.

\xhdr{Base generations and mixture prior setup}
We sample 20 base generations from $\lmprior$ for each instance, using a moderate temperature of 0.8 (prompts in Appendix~\ref{appendix:generation-prompts}).
We fix $\beta = 0.1$ for both datasets
and report plug-in estimates of the metrics using the empirical posterior over each instance's 20 generations.
For example, to compute the empirical risk over a test set $D_\text{test} = \{x_i\}_{i=1}^{N}$, we average \eqref{eq:empirical-conditional-risk} over the test instances $x_i$.
We note that for a fixed and finite number of calibration samples $M$, at very small values of $\beta$, the asymptotic guarantees of  Corollary~\ref{corollary:limsup} may not translate to the population posterior $p^\tau$; for a thorough investigation, see Appendix~\ref{appendix:rs-ablation}.

\xhdr{Baseline}
We compare against \citet{mohri:hashimoto}, the closest prior work.
We re-implement \citet{mohri:hashimoto}'s algorithm so that it does not require the use of a separate model to decompose generations into atomic claims; instead, we encourage the generator to keep clauses brief and split on sentences.
We do so to illustrate a tight integration of each method into the generation loop without any reliance on additional restructuring steps, and to enable a direct comparison to our method.
For brevity, we refer to this baseline as \texttt{MH}.

\begin{wrapfigure}{r}{0.35\linewidth}
    \vspace{-3.6em}
    \centering
    \includegraphics[width=\linewidth]{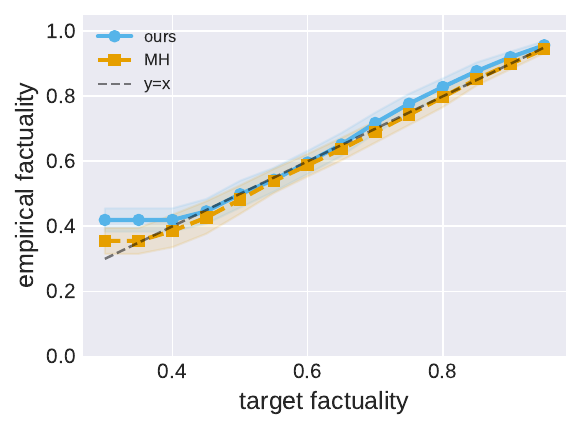}
    \caption{Empirical vs.\ target factuality on FActScore; ours (\colorours) and \texttt{MH} (\colormh). Both attain valid coverage.}
    \label{fig:calibration}
    \vspace{-2em}
\end{wrapfigure}

\subsection{Validity}
\label{sec:exp-calibration}

\Cref{fig:calibration} shows the empirical risk obtained versus the desired target $1-\alpha$ on FActScore (results on MATH in App.~\ref{appendix:math-calibration}).
As expected from \cref{thm:main}, our method achieves valid coverage at all levels. Coverage is also empirically tight; monotonization of the objective does not introduce excessive slack.
Note that low targets ($\approx 0.4$ for FActScore, $\approx 0.5$ for MATH) fall below the base model's capacity, so both methods reduce to sampling directly from the prior and therefore flatline. The remaining gap between the two is due to $\beta$. As our method samples from the mixture prior $\hat p$ instead of $\prior$, it is slightly more conservative at targets below the model's base capacity.

\subsection{Utility}
\label{sec:exp-utility-efficiency}

We use LLM-judge scoring as our primary measure of the downstream utility of the samples we obtain.
For FActScore we construct a Likert-style questionnaire along three axes (\emph{completeness}, \emph{fluency}, and \emph{helpfulness}; rubric in Appendix~\ref{appendix:judge-likert}) and ask GPT-5.4 to rate its level of agreement between 1-5 for each.
Abstentions are passed to the judge as a pre-formatted response of the form ``\emph{I don't know who or what \{entity\} is.}``
For MATH, we simply ask GPT-5.4 whether the proof is \emph{complete}, that is, whether it leads to a final answer and that final answer matches the reference (prompt in Appendix~\ref{appendix:judge-completeness}).
Abstentions are naturally scored as incomplete.
To reduce annotation cost, we subsample a single generation\footnote{For our method, from the posterior. For \texttt{MH}, from the base model with subsequent filtering.} for each prompt $x_i$ for this experiment.

In addition to these metrics, we also compute a series of proxy metrics on all the outputs $y_{ij}$: (i-ii) the average number of claims per output, both with abstentions counting as zero claims and conditioned on non-abstention; (iii) the total fraction of emitted claims that are true, aggregated over all prompts; and (iv) the fraction of non-abstaining outputs that are \emph{entirely} factual, with no false claims at all.

\xhdr{Downstream results}
\Cref{fig:utility-downstream} reports the FActScore Likert and MATH completeness scores, both unconditionally (with abstentions scored as described above) and conditioned on obtaining a non-abstaining sample.
\emph{FActScore:}
Non-abstaining generations from our method degrade less sharply than those from \texttt{MH} as the target factuality increases; the per-axis breakdown in \Cref{fig:utility-likert-axes} (Appendix~\ref{appendix:likert-axes}) reveals that this is predominantly driven by less degradation in completeness and helpfulness, rather than fluency.
In the unconditional average, which conflates response \emph{quality} with response \emph{frequency}, our method underperforms \texttt{MH}.
This indicates that the LM judge prefers a highly edited response to refusals in this setting; how such a refusal should be valued relative to a riskier-but-partially-informative answer ultimately depends on the context in which the system is deployed, and this may not reflect the preference of actual users.
\emph{MATH:} With a more objective notion of utility, the gains from our method become clearer: while the mixture prior initially increases the probability of sampling an abstaining (and thus incomplete) response, \texttt{MH} begins to generate incomplete answers at very high rates for targets $\geq 0.6$.
Our method not only \emph{degrades less}, but in the instances where it does not abstain, the rate of completeness even \emph{increases} with the target factuality.
These findings thus suggest that our method produces useful generations even at high target levels of factuality.

\begin{figure*}[t]
    \centering
    \begin{subfigure}[t]{0.24\textwidth}
        \includegraphics[width=\linewidth]{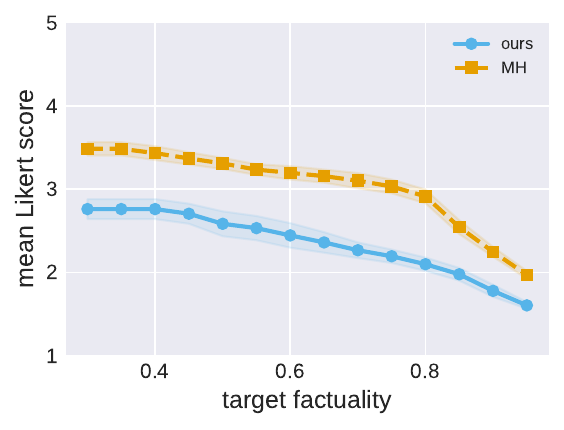}
        \caption{FActScore Likert.}
        \label{fig:fs-likert-uncond}
    \end{subfigure}\hfill
    \begin{subfigure}[t]{0.24\textwidth}
        \includegraphics[width=\linewidth]{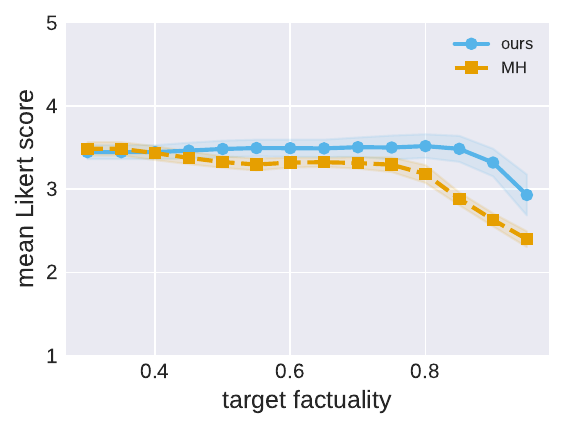}
        \caption{Likert\,$\mid$\,non-abstain.}
        \label{fig:fs-likert-cond}
    \end{subfigure}\hfill
    \begin{subfigure}[t]{0.24\textwidth}
        \includegraphics[width=\linewidth]{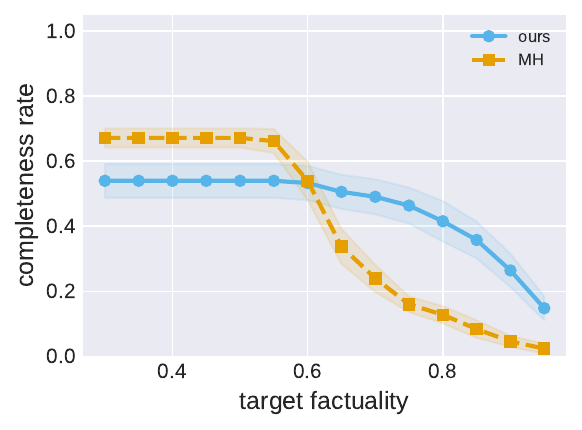}
        \caption{MATH completeness.}
        \label{fig:math-comp-uncond}
    \end{subfigure}\hfill
    \begin{subfigure}[t]{0.24\textwidth}
        \includegraphics[width=\linewidth]{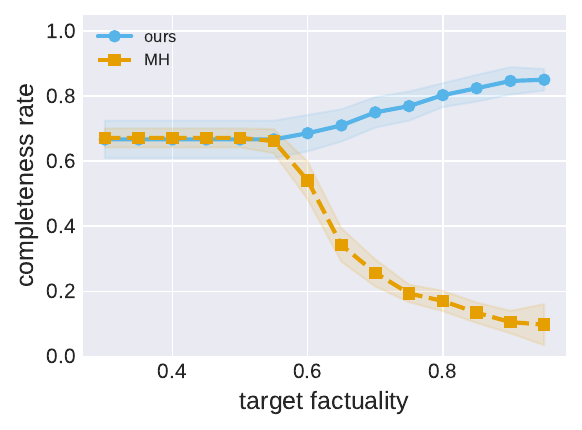}
        \caption{Complete\,$\mid$\,non-abstain.}
        \label{fig:math-comp-cond}
    \end{subfigure}
    \caption{LLM-judge utility for ours (\colorours) vs.\ \texttt{MH} (\colormh) as target factuality varies. Unconditional panels (a,~c) include abstentions (judged on FActScore, marked incomplete for MATH); conditional panels (b,~d) show results for both methods conditioned on the output not being an abstention.}
    \label{fig:utility-downstream}
    \vspace{-2em}
\end{figure*}

\xhdr{Proxy results} 
\Cref{fig:utility-metrics} shows the claim-level proxy metrics, with FActScore on the top row and MATH on the bottom.
\emph{First column}: At intermediate targets, our method emits roughly as many claims per instance as \texttt{MH}.
The mixture prior places $\beta$ of the mass on zero-claim abstentions, deflating our method's average at low targets; this reverses at high targets, where \texttt{MH} increasingly produces heavily filtered generations.
\emph{Second column:}
Restricting our attention to non-abstaining responses, our method emits substantially more claims on both datasets.
Intuitively, \texttt{MH} distributes rejected claims evenly across instances, while our method concentrates filtering on hard instances and leaves easier prompts with full-length generations.
\emph{Third column}:
The fraction of emitted claims which are correct is similar for both methods, increasing (as expected) with the target level.
\emph{Fourth column}:
The one axis on which our method may lose ground is the fraction of non-abstaining outputs that are \emph{entirely} factual.
On FActScore in particular, we observe a slight decrease in the chance that \emph{conditioned on not being an abstention}, any given sample is completely factual.
Ultimately, whether non-abstaining, hallucination-free responses can be obtained is controlled by the capacity of the underlying $\prior$: posterior sampling can only shift the mass of the samples towards safer regions if such regions of support exist.

\begin{figure*}[t]
    \centering
    \begin{subfigure}[t]{0.24\textwidth}
        \includegraphics[width=\linewidth]{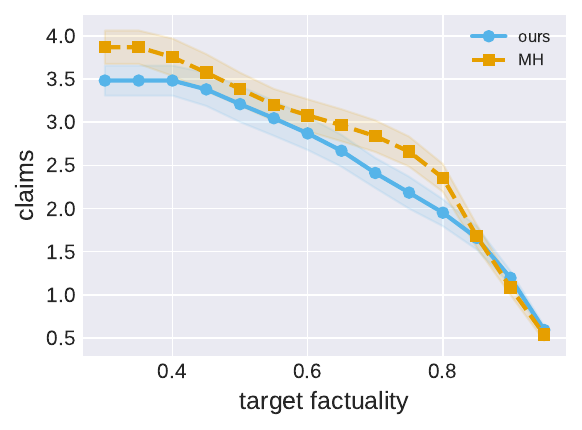}
        \caption{Claims per output.}
    \end{subfigure}\hfill
    \begin{subfigure}[t]{0.24\textwidth}
        \includegraphics[width=\linewidth]{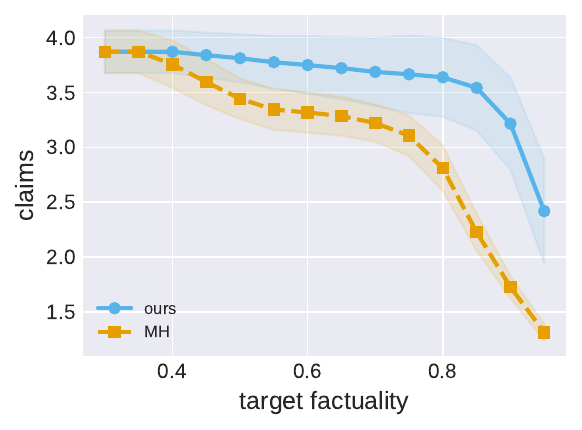}
        \caption{Claims\,$\mid$\,non-abstain.}
    \end{subfigure}\hfill
    \begin{subfigure}[t]{0.24\textwidth}
        \includegraphics[width=\linewidth]{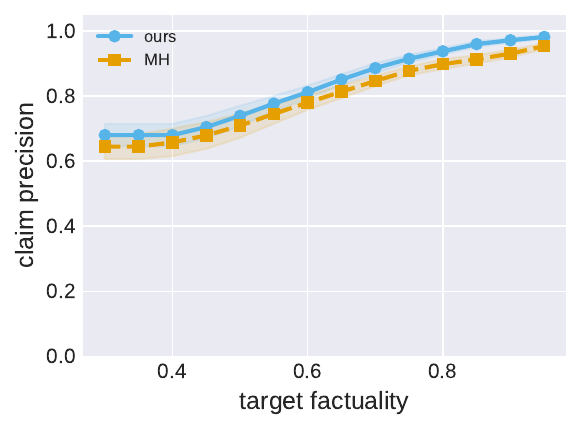}
        \caption{Claim precision.}
    \end{subfigure}\hfill
    \begin{subfigure}[t]{0.24\textwidth}
        \includegraphics[width=\linewidth]{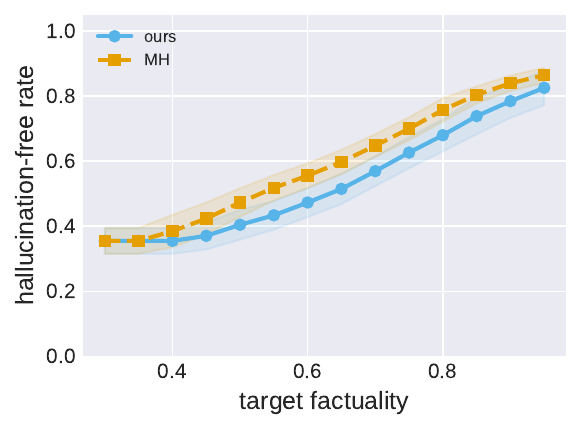}
        \caption{Hallucination-free rate.}
    \end{subfigure}

    \vspace{0.3em}

    \begin{subfigure}[t]{0.24\textwidth}
        \includegraphics[width=\linewidth]{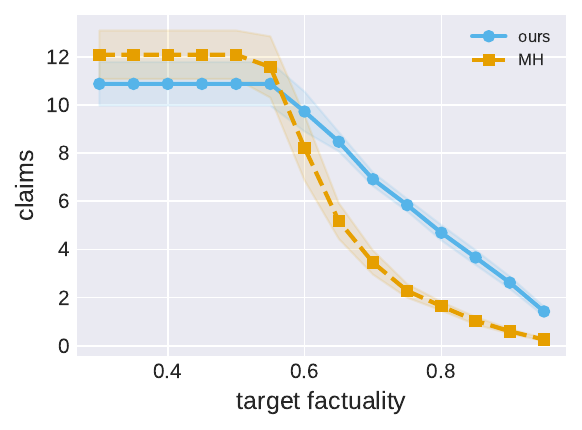}
        \caption{Claims per output.}
    \end{subfigure}\hfill
    \begin{subfigure}[t]{0.24\textwidth}
        \includegraphics[width=\linewidth]{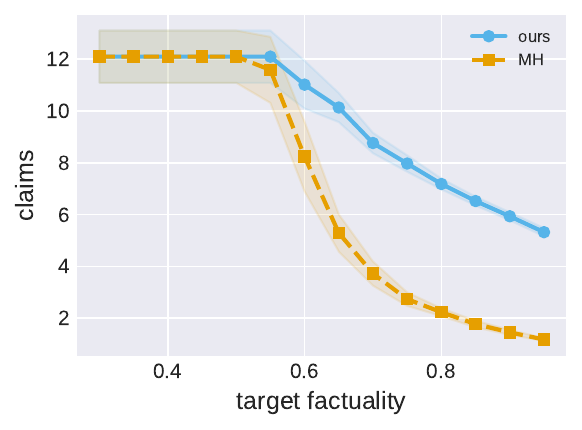}
        \caption{Claims\,$\mid$\,non-abstain.}
    \end{subfigure}\hfill
    \begin{subfigure}[t]{0.24\textwidth}
        \includegraphics[width=\linewidth]{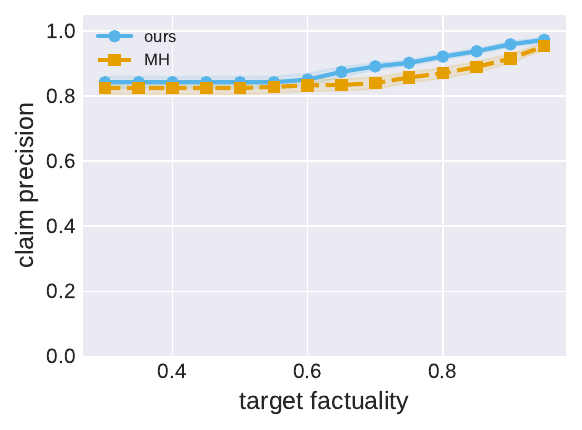}
        \caption{Claim precision.}
    \end{subfigure}\hfill
    \begin{subfigure}[t]{0.24\textwidth}
        \includegraphics[width=\linewidth]{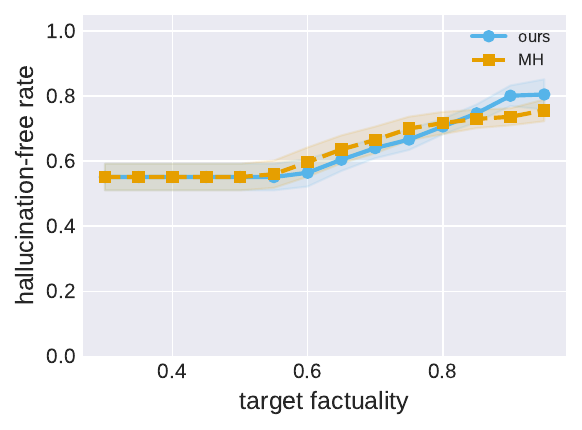}
        \caption{Hallucination-free rate.}
    \end{subfigure}
    \caption{Claim-level proxy metrics for ours (\colorours) vs.\ \texttt{MH} (\colormh) on FActScore (top) and MATH (bottom).}
    \label{fig:utility-metrics}
    \vspace{-10pt}
\end{figure*}

\xhdr{Summary}
While both our method and \texttt{MH} control similar objectives, the resulting downstream utility differs significantly.
As the target level of risk control increases, our method produces more helpful and complete outputs (as judged by an external LM). However, since our sampler cannot stray far from the support of $\lmprior$, instances poorly known to the LLM can only be pushed towards factuality by abstaining, leading to a small uptick in the chance that any given \emph{non-abstaining} answer has at least one incorrect claim in settings where the model lacks the capacity to solve the task.

\vspace{-5pt}
\section{Related Work}
\label{sec:related}

\xhdr{Calibrated stopping rules}
A complementary body of recent work calibrates stopping rules for reasoning models and agents~\citep{Wu2025ThoughtCE, Sadhuka2025EvaluatorRA, Zhou2026OnlineRC}. These methods similarly avoid the limitations of post-hoc filtering (\Cref{section:intro}), but the target of calibration is different: they calibrate decision rules over the reasoning process (when to stop thinking, or when to reject an agent trajectory), whereas we calibrate a family of \emph{output distributions}. On the methodological side, these methods respectively build upon fixed sequence testing~\citep{Angelopoulos2021LearnTT} and E-values~\citep{Ramdas2024HypothesisTW}, and leverage on-policy samples; this is impractical in our setup, where the distribution of scores can be arbitrarily bimodal and no natural grid is available in advance.
Both the goal and the approach presented in this work are thus orthogonal from such efforts.

\xhdr{Particle methods for semantic steering of language models}
Our method is inspired by a line of work that frames semantically constrained generation as posterior sampling~\citep{lew2023sequential,zhao2024probabilistic,loula2025syntactic}.
Similarly to our setup, these works treat a language model as a probabilistic prior and then build a suitably defined posterior distribution.
However, their goal is different than ours, predominantly being concerned with sampling efficiency.
Meanwhile, we focus on how to leverage \emph{unreliable} potentials, a challenge which has not been explored by this literature.

\xhdr{Conformal prediction and risk control under distribution shift} Conformal methods \citep{saunders1999, angelopoulosconformal} have widely been adapted to work under known distribution shift \citep{Tibshirani2019ConformalPU, Angelopoulos2024TheoreticalFO}, tackling the issue via weighted exchangeability.
While we draw inspiration from this approach, we face an additional complication: label distribution shift that is itself directly affected by the calibration parameter. This prevents us from applying methods based on vanilla variants of weighted exchangeability (c.f.\ App.~\ref{appendix:oracles}). We refer to the works of \citet{Fannjiang2022ConformalPU} and \citet{Prinster2024ConformalVG} for a more thorough treatment of conformal guarantees under more general (covariate) shifts.

\section{Discussion: Limitations \& Future Directions}
\label{sec:conclusion}

\xhdr{The $\beta$ parameter} This parameter governs the mixture prior between the LM completions and the explicit abstention response. Its main role is to ensure that the posterior remains well-defined by assigning fixed mass to an always-factual abstention atom, guaranteeing that the normalization constant is bounded away from zero. As a result, $\beta$ controls a \emph{utility-abstention tradeoff}, but not the validity of the procedure on the empirical posterior $\hat{p}^{\hat\tau}$. Larger $\beta$ values provide a stronger fallback path, but shift more probability mass toward abstention and can reduce the informativeness of non-abstaining outputs. Smaller $\beta$ values preserve more mass on LM completions, improving utility when the model is capable, but leave less fallback mass when high-confidence completions are rare. 
Additionally, the choice of $\beta$ and its relative scale compared to $M$ has an effect on the convergence rate of the empirical posterior $\hat{p}^{\hat{\tau}}$ to the population posterior $p^{\hat{\tau}}$ (c.f. Corollary~\ref{corollary:limsup}).

\xhdr{Measuring downstream utility} Our empirical evaluation measures downstream utility using LLM-judge scoring, focusing on properties such as completeness, fluency, and mathematical coherence. While this provides a scalable first validation, it is not a substitute for human evaluation. A natural next step is to complement these experiments with human annotations. Moreover, human preference annotations would shed more light on the trade-off between the utility of responses vs abstentions, as discussed in Section~\ref{sec:experiments}. Such studies would thus help clarify whether the utility improvements measured by LLM judges correspond to user-perceived improvements in realistic deployment settings.

\xhdr{Efficient posterior sampling} Another practical consideration is how posterior particles are sampled. In our experiments, the posterior is approximated using finite sets of base-model completions sampled from the underlying LM, together with the abstention atom. These particles are rescored and reweighted to form an empirical posterior, which is sufficient to validate the finite-sample guarantees proved in the paper. However, this approximation may become inefficient when high-scoring completions are rare or concentrated in low-probability regions of the base distribution. Developing \emph{more efficient} posterior samplers is therefore an important direction for future work.

\bibliographystyle{plainnat} %
\bibliography{clean_references}    %

\appendix

\newpage

\onecolumn
\section{Consequences of Calibrated Scores}
\label{appendix:motivational}
 In this section, we make the informal statement about calibrated forecasts from Section~\ref{sec:motivational} precise. Define the random variable $Z_i := \indicator{\text{claim } i \text{ is conditionally correct} }$, and the indicator $E_i := \indicator{\text{claim } i \text{ ends with an \texttt{EOS} token}}$. We will reason about the filtration $\mathbb{F} = \setof{\calF_i}_{i \in \mathbb{N}}$ where $\calF_i := \sigma(x, \ss_{\leq i}, \mathbf{E}_{\leq i})$. Note that the inclusion of information about end of sequence tokens will allow us to make statements about random stopping times (in particular the random $K$) and can be thought of as a technical artefact void of any deeper meaning.

For shorthand, let $Z_{<i} = \prod_{j=1}^{i-1} Z_j$ and $Z_{\leq i} = Z_{<i} Z_i$.  
Suppose then that the scores were such that
\begin{equation}
	\E[Z_i \sep \calF_i, Z_{<i} = 1] = s_i \quad \quad \text{for } i \geq 1.\label{eq:motivationalassumption:stepwisecalibration} 
\end{equation} 

Intuitively, this means that conditioned on all prior scores and the prior claims being correct, the score $s_i$ is a calibrated predictor of the correctness of the subsequent claim $c_i$. When $k > K$, it is trivially satisfied.

Suppose also that the current score $s_i$ does not give any additional information about the correctness of \emph{past} claims that is not already captured by the preceding scores $\ss_{< i}$; that is,
\begin{equation}
	 \indicator{Z_{< i} = 1} \bot \; s_i, E_i \mid \calF_{i-1}. \label{eq:motivationalassumption:nopeeking}
\end{equation}
Intuitively, this means that the  correctness of a \emph{prefix} can be asserted without looking at the rest of the sentence. We will also write $\indicator{Z_{< i} = 1} \bot \; \calF_{i} \sep \calF_{i-1} $. This essentially assumes that future scores give no additional information about past correctness, if the past scores are available, which we find plausible\footnote{And in any case, this development is purely motivating, and not required for the correctness and calibration guarantees in Section~\ref{sec:calibration}.}. Note also that if $\calF_{k-1}$ reveals that some claim $c_1,\ldots,c_{k-1}$ contains an \texttt{EOS} token, then we assume for mathematical convenience that $s_{k}$ is deterministically $1$ conditional on the information in $\calF_{k-1}$. Then, no information is gained from $\calF_{k}$ either, and the assumption is also trivially satisfied when $k>K$.

This appendix supplies the formal statement and derivation backing the informal \Cref{prop:idealized-informal} in \Cref{sec:motivational}. Concretely:

\begin{proposition}[Formal version of Observation~\ref{prop:idealized-informal}]
\label{prop:idealized-formal}
Let $\{s_i\}_{i \geq 1}$ be the claim-level score sequence produced when sampling $y \sim \lmprior(\cdot \sep x)$, let $K$ denote the random number of claims in $y$, and let $Z_i$ indicate conditional correctness of claim $i$ (\Cref{sec:motivational}). Suppose $s$ satisfies the stepwise-calibration condition~\eqref{eq:motivationalassumption:stepwisecalibration} and the conditional independence condition~\eqref{eq:motivationalassumption:nopeeking}. Then for any $\tau \in [0,1]$,
\[
    \P\!\left(Z_{\leq K} = 1 \;\bigg|\; \textstyle\prod_{i=1}^{K} s_i \geq \tau\right) \geq \tau.
\]
Setting $\tau = 1-\alpha$ gives $\E[\ell(x,y) \mid \Phi^{1-\alpha}(y \mid x) = 1] \leq \alpha$.
\end{proposition}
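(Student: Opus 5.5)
The plan is to show that the running product $P_k := \prod_{i=1}^k s_i$ equals $\P(Z_{\leq k}=1 \sep \calF_k)$ for every $k$. Then I evaluate this identity at the random stopping index $K$, and finally average over the event $\{P_K \geq \tau\}$, which is $\calF_K$-measurable. The identity is proved by induction on $k$, with base case $P_0 = 1$ (the empty product) and $Z_{\leq 0}=1$.

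For the inductive step, I write
\[
\P(Z_{\leq k}=1 \sep \calF_k) = \P(Z_k = 1 \sep \calF_k, Z_{<k}=1)\,\P(Z_{<k}=1\sep\calF_k).
\]
The first factor equals $s_k$ by the stepwise calibration condition~\eqref{eq:motivationalassumption:stepwisecalibration}. For the second factor, the no-peeking condition~\eqref{eq:motivationalassumption:nopeeking} says that $\indicator{Z_{<k}=1}$ is independent of $(s_k,E_k)$ given $\calF_{k-1}$. Since $\calF_k = \calF_{k-1}\vee\sigma(s_k,E_k)$, this gives $\P(Z_{<k}=1\sep\calF_k)=\P(Z_{<k}=1\sep\calF_{k-1})$, which equals $P_{k-1}$ by the induction hypothesis. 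Multiplying the two factors yields $P_k$. For $k>K$, the conventions $s_k=1$ and $Z_k=1$ trivially make $P_k$ constant, and this is consistent with the identity.

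Next I pass to the random index. $K$ is a stopping time for $\mathbb F$, because $\{K=k\}$ is determined by $\mathbf E_{\leq k}$. Assuming the maximum length $L$ bounds $K$, I decompose on $\{K=k\}\in\calF_k$. For any $A\in\calF_K$ I have $A\cap\{K=k\}\in\calF_k$, so
\[
\E[\indicator{Z_{\le K}=1}\indicator{A}] = \sum_k \E[\indicator{Z_{\le k}=1}\indicator{A\cap\{K=k\}}] = \sum_k \E[P_k\indicator{A\cap\{K=k\}}] = \E[P_K\indicator{A}].
\]
This is the optional-stopping step, and it is the main technical point. It is where the inclusion of $E_i$ in the filtration matters, and where boundedness of $K$ (by $L$) avoids any integrability issues. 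I take $A=\{P_K\geq\tau\}$, which lies in $\calF_K$. This gives $\P(Z_{\le K}=1, P_K\ge\tau)=\E[P_K\indicator{P_K\ge\tau}]\ge\tau\,\P(P_K\ge\tau)$. Dividing by $\P(P_K\ge\tau)$, whenever it is positive, gives the claim.

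For the final sentence of the statement, I interpret $\ell(x,y)$ as the indicator that some claim is not conditionally correct, i.e. $\ell = 1-\indicator{Z_{\le K}=1}$. Since $\Phi^{1-\alpha}(y\sep x)=\indicator{P_K\ge 1-\alpha}$, setting $\tau=1-\alpha$ gives $\E[\ell\sep\Phi^{1-\alpha}=1]\le 1-(1-\alpha)=\alpha$. Conditioning the prior on $\Phi^{1-\alpha}=1$ is exactly sampling from $p^{1-\alpha}_{\llm}$, which recovers Observation~\ref{prop:idealized-informal}.

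I expect the only delicate point to be the precise meaning of the conditioning on $\{Z_{<k}=1\}$ in~\eqref{eq:motivationalassumption:stepwisecalibration}. It must be read as an equality on that event, so that the product decomposition above is legitimate even when $\P(Z_{<k}=1\sep\calF_k)=0$. I would state this convention explicitly in the proof.
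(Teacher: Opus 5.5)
Your proposal is correct and follows essentially the same route as the paper: induction on $\E[Z_{\le k}\mid\calF_k]=\prod_{i\le k}s_i$ via stepwise calibration and the no-peeking condition, passage to the bounded stopping time $K$, and then averaging over the $\calF_K$-measurable event $\{\prod_{i\le K}s_i\ge\tau\}$. Your explicit decomposition over the events $\{K=k\}$ spells out the stopped-$\sigma$-algebra step that the paper asserts directly. Your added caveat about dividing only when $\P(\prod_{i\le K}s_i\ge\tau)>0$ is a harmless refinement the paper leaves implicit.
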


The remainder of the appendix develops the argument. We pick up the discussion from \Cref{sec:motivational} again. Recall that we defined the $\sigma$-algebras $\calF_i := \sigma(x, \ss_{\leq i}, \mathbf{E}_{\leq i})$. Formally, we define the stopping time $K$ to be
$$
    K = \inf\setof{k \mid E_k = 1}.
$$
Making what we mentioned above about mathematical convenience more precise, we also set $Z_k = 1$, $s_k = 1$ and $c_k = \emptyset$ for $k > K$.  Clearly, $K$ is a valid stopping time, since the event $\setof{K \leq \ell}$ is $\calF_\ell$-measurable, because $\calF_\ell$ includes $\mathbf{E}_{\leq \ell}$ in its definition, and the information contained in the events $E_1,\ldots,E_\ell$ is sufficient to determine whether $K = \ell$ has occurred.

Let us first fix $k$.

\begin{align*}
	\E[Z_{\le k} \mid \calF_k]
	&= \E\!\left[Z_{<k}Z_k \mid \calF_k\right] \\
	&= \E\!\left[Z_{<k}\E[Z_k \mid \calF_k, Z_{<k} ] \mid \calF_k\right] \\
	&\stackrel{(i)}{=} \E\!\left[{Z_{<k}}\E[Z_k \mid \calF_k, Z_{<k}=1] \mid \calF_k\right] \\
	&\stackrel{(ii)}{=}\E\!\left[{Z_{<k} } s_k \mid \calF_k\right] \\
	&= s_k \, \P(Z_{<k}=1 \mid \calF_k) \\
	&\stackrel{(iii)}{=} s_k \, \P(Z_{<k}=1 \mid \calF_{k-1}). 
\end{align*}
We briefly comment on the steps: \begin{enumerate}[(i)]
	\item follows because $$
	\E[Z_k \sep \calF_k, Z_{<k}] = \indicator{A}\E[Z_k \sep \calF_k, A] + \indicator{A^c} \E[Z_k \sep \calF_k, A^c]
	$$
	and for any event $A$ that is determined by $Z_{<k}$ (meaning deterministic conditional on the realization of the random variable $Z_{<k}$). Multiplying by $\indicator{A}$ for $A = \setof{Z_{<k} = 1}$ shows the step.
	\item follows because of \eqref{eq:motivationalassumption:stepwisecalibration}
	\item follows because of \eqref{eq:motivationalassumption:nopeeking}
\end{enumerate}

A proof by induction shows that for any \emph{fixed} $k \geq 1$
\begin{align}
	\E[Z_{\le k} \mid \calF_k] = \textstyle\prod_{i=1}^k s_i.
\end{align}

Since $K$ is a stopping time and $K \leq L$ almost surely\footnote{The maximum length of the LLM response}, we can deduce that for the stopped $\sigma$-algebra,
\begin{align}
	\E[Z_{\le K} \mid \calF_K] = \textstyle\prod_{i=1}^K s_i,
\end{align}
which shows the first part of our claim.

We are now interested in 
$$
	\E\left[\ell(x,y)  \sep \prod_{i=1}^K s_i \geq \tau\right] = \P\left(\ell(x,y) = 1 \sep \prod_{i=1}^K s_i \geq \tau\right) = \P\left(Z_{\leq K} = 0 \sep \prod_{i=1}^K s_i \geq \tau\right).
$$ 
In words, this is the probability of incorrectness of some claim jointly over the claims, when we sample from the posterior distribution conditional on the scores being above the threshold $\tau$. We will now show that this quantity is in turn upper bounded by $1-\tau$, suggesting that calibrated scores lead to the choice $\tau = 1-\alpha$ and guarantee $\alpha$-level risk control.

To see why, let us first rewrite
\begin{align*}
	\P(Z_{\leq K} = 1 \sep \textstyle \prod_{i=1}^K s_i \geq \tau) &= \frac{\P(Z_{\leq K} = 1 \text{ and } \prod_{i=1}^K s_i \geq \tau )}{\P(  \prod_{i=1}^K s_i \geq \tau )}  \\
    &= \frac{\E\left[\indicator{Z_{\leq K} = 1 \text{ and } \prod_{i=1}^K s_i \geq \tau }\right]}{\P(  \prod_{i=1}^K s_i \geq \tau )} \\ &=  \frac{\E\left[ Z_{\leq K} \indicator{\prod_{i=1}^K s_i \geq \tau }\right]}{\P(  \prod_{i=1}^K s_i \geq \tau )} .
\end{align*}
We focus on the numerator.  Since $\prod_{i=1}^K s_i$ is $\calF_K$-measurable for the random stopping time $K$, the event $A_\tau=\setof{\prod_{i=1}^K s_i \geq \tau  }$ is also $\calF_K$-measurable. Therefore
\begin{align*}
	\E[Z_{\leq K} \mathbf{1}_{A_\tau}]
	&= \E\!\left[\E[Z_{\leq K} \mathbf{1}_{A_\tau} \mid \mathcal \calF_K]\right] \\
	&= \E\!\left[\E[Z_{\leq K} \mid \mathcal \calF_K]\mathbf{1}_{A_\tau}\right] \\
	&= \E[\textstyle \prod_{i=1}^K s_i \,\mathbf{1}_{A_\tau}] \\
	&\ge \tau \E[\mathbf{1}_{A_\tau}] \\
	&= \tau \P(A_\tau).
\end{align*}
Dividing by $\P(A_\tau)$ gives
\[
\E[Z_{\leq K} \mid A_\tau] \geq \tau.
\]
Since $Z_{\leq K}$ is binary, and setting $\tau = 1-\alpha$, this is equivalent to 
\[
\P(Z_{\leq K}=1 \mid \Phi^{1-\alpha} (y \mid x) = 1) \ge 1-\alpha.
\]

\section{Deferred Proofs from Section~\ref{sec:calibration}}
\label{appendix:proof}
\subsection{Proof of Proposition~\ref{prop:idealized} (Known $Z^\tau(x)$)}
\label{appendix:proof:idealized}

We collect the auxiliary results needed for the proof, then present the main argument. Along with monotonicity by design, the proof will make use of the following likelihood ratio reweighting:
\begin{fact}
\label{fact:likelihoodratios}
    Fix $x$ and $\tau$. Suppose we draw  $y \sim  p(\cdot \sep x)$ and $y' \sim  p^\tau(\cdot \sep x)$. Then
    \begin{equation}
        \E_{y \sim p(\cdot \sep x)}\left[\frac{\Phi^\tau(y \sep x)}{Z^\tau(x)} \ell(x,y) \right] = \E_{y' \sim p^\tau(\cdot \sep x)}\left[\ell(x,y') \right].
    \end{equation}
\end{fact}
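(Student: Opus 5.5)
This is a change-of-measure identity, so the plan is to expand the left-hand side as an integral against the mixture prior $p(\cdot \sep x)$ and recognize the integrand as the posterior density. First, recall the definition $p^\tau(y \sep x) = \Phi^\tau(y \sep x)\, p(y \sep x)/Z^\tau(x)$. Here $Z^\tau(x) = \int \Phi^\tau(y \sep x)\, \d p(y \sep x)$, and in the mixture setting of \Cref{sec:mixture} we have $Z^\tau(x) \geq \beta > 0$. Hence the posterior is a well-defined probability measure, absolutely continuous with respect to $p(\cdot \sep x)$, with Radon--Nikodym derivative $\frac{\d p^\tau(\cdot \sep x)}{\d p(\cdot \sep x)}(y) = \Phi^\tau(y \sep x)/Z^\tau(x)$.

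Second, write
\[
\E_{y \sim p(\cdot \sep x)}\left[\frac{\Phi^\tau(y \sep x)}{Z^\tau(x)}\ell(x,y)\right] = \int \ell(x,y)\, \frac{\Phi^\tau(y \sep x)}{Z^\tau(x)}\, \d p(y \sep x).
\]
By the density identity above, the right-hand side equals $\int \ell(x,y)\, \d p^\tau(y \sep x) = \E_{y' \sim p^\tau(\cdot \sep x)}[\ell(x,y')]$. Because $\calV^*$ is truncated at length $L$, the prior restricted to $\calV^*$ is a finite sum over completions. The abstention atom, with mass $\beta$, is simply one more point of the support, where $\Phi^\tau(\texttt{abstain} \sep x) = 1$ since $S(\texttt{abstain} \sep x) = 1 \geq \tau$. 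The computation is therefore a termwise identity: $\sum_y p(y \sep x)\,\Phi^\tau(y \sep x)\,\ell(x,y)/Z^\tau(x) = \sum_y p^\tau(y \sep x)\,\ell(x,y)$.

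The only point requiring care, and hence the main obstacle, is integrability and well-definedness. We need $Z^\tau(x) > 0$, which is guaranteed by $\beta > 0$. We also need both sides to be finite, or at least equal as extended nonnegative reals. Since $\ell \geq 0$, Tonelli applies without any finiteness assumption, so the identity holds in $[0,\infty]$; for bounded losses such as the $\{0,1\}$ loss used here, both sides are finite. I would state this remark explicitly and conclude.
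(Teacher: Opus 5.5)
Your proposal is correct and follows the same route as the paper, which expands $\E_{y' \sim p^\tau(\cdot \sep x)}[\ell(x,y')]$ as a sum over completions and substitutes $p^\tau(y \sep x) = \Phi^\tau(y \sep x)\, p(y \sep x)/Z^\tau(x)$ term by term. Your remarks on well-definedness ($Z^\tau(x) \geq \beta > 0$) and on nonnegativity of $\ell$ are harmless additions that the paper leaves implicit, although a single sum involves no interchange of integrals, so no Tonelli-type argument is needed.
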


\subsubsection*{Proof of Fact~\ref{fact:likelihoodratios}}

\begin{proof}
    We have
    \begin{align*}
       \E_{y \sim p^\tau(\cdot \sep x)}\left[\ell(x,y)  \right] 
        &= \  \sum_{y \in \calV^*} \ell(x,y)\, p^\tau(y \sep x)    \\
        &=   \sum_{y \in \calV^*} \ell(x,y) \frac{\Phi^\tau(y \sep x)\, p(y \sep x)}{Z^\tau(x)}     \\
        &= \E_{y \sim p(\cdot \sep x)}\left[\ell(x,y) \frac{\Phi^\tau(y \sep x)}{Z^\tau(x)}  \right].
    \end{align*}
\end{proof}

We now turn to the monotonized importance weights $\bar{w}_i(\tau) = \frac{\Phi^\tau(y_i\sep x_i)}{Z^{S(y_i \sep x_i)}(x_i)}$, as well as their relationship to the weights $w_i(\tau) = \frac{\Phi^\tau(y_i\sep x_i)}{Z^{\tau}(x_i)}$. We write $\ell_i = \ell(x_i,y_i)$ as a shorthand in what follows.

\begin{lemma}
\label{lemma:monotonized-weight}
The monotonized weights $\bar{w}_i(\tau) $ satisfy:
\begin{enumerate}[(1)]
    \item $\bar{w}_i(\tau)\ell_i$ is non-increasing in $\tau$.
    \item $\bar{w}_i(\tau)\ell_i \geq w_i(\tau)\ell_i$ for all $\tau$, and hence $\E_{p^0}[\bar{w}_i(\tau)\ell_i] \geq \E_{p^\tau}[\ell_i]$.
    \item $\bar{w}_i(\tau)\ell_i \leq 1/\beta$.
\end{enumerate}
\end{lemma}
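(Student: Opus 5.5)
The three claims concern the monotonized weight $\bar w_i(\tau)=\Phi^\tau(y_i\sep x_i)/Z^{S(y_i\sep x_i)}(x_i)$. Each follows from unwinding the definitions, and the plan is to take them in the stated order. Throughout we use two facts. First, $\ell_i\ge 0$; we also take $\ell_i\le 1$, as in the paper's binary factuality loss. Second, $\tau\mapsto Z^\tau(x_i)$ is non-increasing, since $\Phi^\tau(y\sep x_i)=\indicator{S(y\sep x_i)\ge\tau}$ is pointwise non-increasing in $\tau$.

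\emph{Claim (1).} The denominator $Z^{S(y_i\sep x_i)}(x_i)$ does not depend on $\tau$; it is a fixed positive number, at least $\beta$, determined by the sample. The numerator $\Phi^\tau(y_i\sep x_i)$ is non-increasing in $\tau$. Multiplying by the non-negative constant $\ell_i$ preserves this, so $\bar w_i(\tau)\ell_i$ is non-increasing.

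\emph{Claim (2), pointwise part.} Split into two cases.
\begin{itemize}
\item If $S(y_i\sep x_i)<\tau$, then $\Phi^\tau(y_i\sep x_i)=0$, so both weights vanish.
\item If $S(y_i\sep x_i)\ge\tau$, the numerators coincide and equal $1$. Monotonicity of $Z^{\cdot}(x_i)$ gives $Z^{S(y_i\sep x_i)}(x_i)\le Z^\tau(x_i)$, hence $\bar w_i(\tau)\ge w_i(\tau)$.
\end{itemize}
Multiplying by $\ell_i\ge 0$ gives $\bar w_i(\tau)\ell_i\ge w_i(\tau)\ell_i$.

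\emph{Claim (2), expectation part.} Take expectations over $y_i\sim p^0(\cdot\sep x_i)=p(\cdot\sep x_i)$ with $x_i$ fixed. The pointwise inequality gives $\E_{p^0}[\bar w_i(\tau)\ell_i]\ge\E_{p^0}[w_i(\tau)\ell_i]$. By Fact~\ref{fact:likelihoodratios}, the right-hand side equals $\E_{y'\sim p^\tau(\cdot\sep x_i)}[\ell(x_i,y')]$. Averaging additionally over $x_i$ also works, by the tower property.

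\emph{Claim (3).} The mixture prior puts mass $\beta$ on \texttt{abstain}, and $S(\texttt{abstain}\sep x)=1$ passes every threshold in $[0,1]$. Hence $Z^{t}(x_i)\ge\beta$ for every $t\in[0,1]$, in particular for $t=S(y_i\sep x_i)\in(0,1]$. Therefore $\bar w_i(\tau)\le 1/\beta$, and with $\ell_i\le 1$ we get $\bar w_i(\tau)\ell_i\le 1/\beta$.

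\emph{Main obstacle.} None of the steps is deep. The only point needing care is the expectation clause of (2). There we must state which randomness is integrated (the draw of $y_i$ from the prior given $x_i$) and invoke Fact~\ref{fact:likelihoodratios} at the fixed $\tau$. It must be applied before any data-dependent choice of $\tau$, since that Fact is only valid for deterministic $\tau$. We also rely on $\ell\le 1$ in (3); if only $\ell\in[0,B]$ is available, the bound becomes $B/\beta$.
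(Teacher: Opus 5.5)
Your proposal is correct and follows the paper's proof almost verbatim. Part (1) rests on the $\tau$-independent denominator multiplying the non-increasing indicator; part (2) uses the case split on whether $S(y_i \mid x_i) \ge \tau$, together with monotonicity of $Z^\tau$ and Fact~\ref{fact:likelihoodratios}; part (3) uses the always-surviving abstention atom, which gives $Z^{S(y_i\mid x_i)}(x_i) \ge \beta$, combined with $\ell \le 1$.
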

\begin{proof}
(1): $\bar{w}_i(\tau)\ell_i = \frac{\ell_i}{Z^{S(y_i \sep x_i)}(x_i)} \mathbf{1}\{S(y_i \sep x_i) \geq \tau\}$ is constant for $\tau \leq S(y_i \sep x_i)$ and zero for $\tau > S(y_i \sep x_i)$. (2): For $\tau \leq S(y_i \sep x_i)$, $Z^{S(y_i \sep x_i)}(x_i) \leq Z^\tau(x_i)$ (since $Z^\tau$ is non-increasing), so $\bar{w}_i(\tau) \geq w_i(\tau)$; for $\tau > S(y_i \sep x_i)$, both are zero. The expectation bound follows from Fact~\ref{fact:likelihoodratios}. (3): $Z^{S(y_i \sep x_i)}(x_i) \geq \beta$ since the abstention atom always survives, and $\ell, \Phi^\tau \in \setof{0,1}$.
\end{proof}

The proof of Proposition~\ref{prop:idealized} assumes a hallucinated completion from the mixture prior on the test prompt. Formally, we assume that $y_{n+1} \stackrel{\mathrm{i.i.d.}}{\sim} p(\cdot \sep \xtest)$ conditionally independently of the sample $\ytest$ given $\xtest$. We also denote $x_{n+1} = \xtest$ for convenience. The selection rule $\idealizedrule$ is then compared to the ``oracle'' one that assumes access to a hallucinated base completion:
\newcommand{\idealizedoraclerule}{\tau_{\mathrm{id}}^*}
\begin{align}
    \idealizedoraclerule = \inf\setof{\tau \sep \frac{1}{n+1} \sum_{i=1}^{n+1} \frac{\Phi^\tau(y_i \sep x_i)}{Z^{S(y_i \sep x_i)}(x_i)}\ell(x_i,y_i)  \leq \alpha }. \label{eq:noiselessoracle}
\end{align}

\begin{lemma}
\label{lemma:tau-ordering-idealized}
    $\idealizedrule \geq \idealizedoraclerule$ almost surely.
\end{lemma}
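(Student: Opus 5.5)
The plan is to compare the two objectives pointwise in $\tau$ and then use that both defining sets are upward closed. Write
\[
\hat L_n(\tau) = \frac{1}{n+1}\sum_{i=1}^{n} \bar w_i(\tau)\ell_i, \qquad L^*_{n+1}(\tau) = \frac{1}{n+1}\sum_{i=1}^{n+1} \bar w_i(\tau)\ell_i .
\]
Then $\idealizedrule = \inf\{\tau : \hat L_n(\tau) + \frac{1}{\beta(n+1)} \leq \alpha\}$ and $\idealizedoraclerule = \inf\{\tau : L^*_{n+1}(\tau) \leq \alpha\}$.

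\textbf{Step 1 (pointwise domination).} By Lemma~\ref{lemma:monotonized-weight}(3), the extra test term satisfies $\bar w_{n+1}(\tau)\ell_{n+1} \leq 1/\beta$ for every $\tau$. This holds because $Z^{S(y_{n+1}\sep x_{n+1})}(x_{n+1}) \geq \beta$ when the abstention atom is present, and because $\Phi^\tau$ and $\ell$ take values in $\{0,1\}$. Consequently
\[
L^*_{n+1}(\tau) \leq \hat L_n(\tau) + \frac{1}{\beta(n+1)} \quad \text{for all } \tau \in [0,1],
\]
deterministically, i.e.\ on every realization.

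\textbf{Step 2 (set inclusion).} Step 1 gives
\[
\Big\{\tau : \hat L_n(\tau) + \tfrac{1}{\beta(n+1)} \leq \alpha\Big\} \subseteq \{\tau : L^*_{n+1}(\tau) \leq \alpha\}.
\]
The infimum of a subset is at least the infimum of the superset, so $\idealizedrule \geq \idealizedoraclerule$ on every sample path, which is stronger than almost surely. Monotonicity from Lemma~\ref{lemma:monotonized-weight}(1) is not strictly needed for the inclusion itself. It does guarantee that both sets are up-sets in $\tau$, so the infima behave as thresholds, and that is what later steps of the main proof rely on.

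\textbf{Step 3 (edge conventions).} The only place care is required is the convention when a set is empty. The rule $\idealizedrule$ may be infeasible for every $\tau \in [0,1]$, for instance when $1/(\beta(n+1)) > \alpha$. In that case the infimum should be read as $+\infty$, or as $1$ plus a convention that $\tau$ above every score forces abstention. Under any consistent convention applied to both rules, an empty set for $\idealizedrule$ yields the largest possible value, so the inequality still holds. I would state this convention explicitly.

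Beyond that there is no real obstacle: the bound in Lemma~\ref{lemma:monotonized-weight}(3) is exactly what absorbs the unseen test point's contribution.
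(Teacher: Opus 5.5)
Your proposal is correct and follows essentially the same route as the paper. The paper also shows that every $\tau$ feasible for the rule with the $\frac{1}{\beta(n+1)}$ correction is feasible for the oracle rule, by absorbing the test term with the bound $\bar w_{n+1}(\tau)\ell_{n+1} \leq 1/\beta$, and then concludes by comparing infima. Your remark on the empty-set convention is a harmless clarification that the paper leaves implicit.
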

\begin{proof}
    For any $\tau$ in the feasible set of $\idealizedrule$, we have
    $\frac{1}{n+1}\sum_{i=1}^{n} \bar{w}_i(\tau)\ell_i + \frac{1}{\beta(n+1)} \leq \alpha$.
    Since $\bar{w}_{n+1}(\tau)\ell_{n+1} \leq 1/\beta$ by Lemma~\ref{lemma:monotonized-weight}:
    $$
        \frac{1}{n+1}\sum_{i=1}^{n+1} \bar{w}_i(\tau)\ell_i \leq \frac{1}{n+1}\sum_{i=1}^{n} \bar{w}_i(\tau)\ell_i + \frac{1}{\beta(n+1)} \leq \alpha.
    $$
    So $\tau$ is in the feasible set of $\idealizedoraclerule$, giving $\idealizedoraclerule \leq \idealizedrule$ a.s.
\end{proof}

\subsubsection*{Main Proof of Proposition~\ref{prop:idealized}}

\begin{proof}
\medskip\noindent\textit{Step 1: Reduce to monotonized weights at $\idealizedrule$.}
Recall that the test response $\ytest$ is drawn from $p^{\idealizedrule}(\cdot \sep \xtest)$ for a fresh test prompt $\xtest \sim \covdist$ exchangeable with the calibration prompts, where $\idealizedrule = \idealizedrule(D_{1:n})$ depends only on the calibration set. Let us denote by $x_{n+1} = \xtest$ and let $y_{n+1} \sim p(\cdot \sep x_{n+1})$ be a \emph{hallucinated} sample from the mixture prior. Since $\idealizedrule$ is chosen without knowledge of the realization of $(x_{n+1}, y_{n+1})$, Fact~\ref{fact:likelihoodratios} gives $\E[\ell(\xtest,\ytest) \sep \idealizedrule, \xtest] =  \E[w_{n+1}(\idealizedrule)\ell(x_{n+1}, y_{n+1}) \sep \idealizedrule, \xtest]$. By Lemma~\ref{lemma:monotonized-weight}, $\bar{w}_{n+1}(\tau) \geq w_{n+1}(\tau)$ pointwise, so
\begin{equation*}
    \E[\ell(\xtest, \ytest) \sep \idealizedrule] \leq \E[\bar{w}_{n+1}(\idealizedrule)\,\ell_{n+1} \sep \idealizedrule],  \quad \text{a.s.}
\end{equation*}
From this
\begin{equation}
    \E[\ell(\xtest, \ytest)] \leq \E[\bar{w}_{n+1}(\idealizedrule)\,\ell_{n+1}]\label{eq:reduced-to-monotonized}
\end{equation}
where we took the expectation over the randomness in $\idealizedrule$ again.

\medskip\noindent\textit{Step 2: Transfer from $\idealizedrule$ to $\idealizedoraclerule$.}
By Lemma~\ref{lemma:tau-ordering-idealized}, $\idealizedrule \geq \idealizedoraclerule$ a.s. Since by Lemma~\ref{lemma:monotonized-weight} $\bar{w}_{n+1}(\tau)\ell_{n+1}$ is non-increasing in $\tau$:
\begin{equation}
    \bar{w}_{n+1}(\idealizedrule)\,\ell_{n+1} \leq \bar{w}_{n+1}(\idealizedoraclerule)\,\ell_{n+1}, \quad \text{a.s.} \label{eq:almostsuremonotone:idealized}
\end{equation}
\medskip\noindent\textit{Step 3: Exchangeability.}
The hallucinated calibration set $D_{1:n+1} = \{(x_i, y_i)\}_{i=1}^{n+1}$ is exchangeable\footnote{Indeed, we assumed the prompts were exchangeable, and the responses were  conditionally independent draws from $p(\cdot \sep x_i)$.}, and $\idealizedoraclerule = \idealizedoraclerule(D_{1:n+1})$ is permutation invariant. Therefore, conditional on the multiset $E = \setof{(x_i,y_i)}_{i \in [n+1]}$, we have 
$$
    \E[\bar{w}_{n+1}(\idealizedoraclerule)\,\ell_{n+1} \sep E] = \frac{1}{n+1}\sum_{i=1}^{n+1} \bar{w}_i(\idealizedoraclerule) \ell_i \leq \alpha,
$$
where inequality uses the definition of $\idealizedoraclerule$ in \eqref{eq:noiselessoracle}. Integrating out $E$ again gives us
$$
    \E[\bar{w}_{n+1}(\idealizedoraclerule)\,\ell_{n+1}] = \E\!\left[\frac{1}{n+1}\sum_{i=1}^{n+1} \bar{w}_i(\idealizedoraclerule)\,\ell_i\right] \leq \alpha.
$$
Finally, we combine the pieces to obtain $$\E[\ell(\xtest, \ytest)] \stackrel{\eqref{eq:reduced-to-monotonized}}{\leq} \E[\bar{w}_{n+1}(\idealizedrule)\,\ell_{n+1}] \stackrel{\eqref{eq:almostsuremonotone:idealized}}{\leq} \E[\bar{w}_{n+1}(\idealizedoraclerule)\,\ell_{n+1}] \leq \alpha.$$
\end{proof}

\subsection{Proof of Theorem~\ref{thm:main} (Empirical Posterior Calibration)}
\label{appendix:proof:theorem32}

We first give an equivalent of Fact~\ref{fact:likelihoodratios} for this setting.
\begin{lemma}[Risk under Empirical Posterior]
\label{lemma:factanalogue}
Let $\gamma$ be any $[0,1]$-valued random variable. For any $i$, suppose that conditional on observing $(x_i, T_i, \gamma)$ we draw $y \sim \hat p^\gamma(\cdot \sep x_i, T_i)$. Then
$$
    \E[\ell(x_i, y) \sep x_i, T_i, \gamma] = \hat H_i(\gamma).
$$
\end{lemma}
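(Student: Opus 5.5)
The plan is to compute the conditional expectation directly. Once we condition on $(x_i, T_i, \gamma)$, the measure $\hat p^\gamma(\cdot \sep x_i, T_i)$ is a fixed finite discrete distribution. It is supported on the $M$ particles $y_{i1},\ldots,y_{iM}$ and the abstention atom. The proof is then the discrete analogue of Fact~\ref{fact:likelihoodratios}. The only subtlety is that $\gamma$ is random, and it may depend on the calibration data and on $T_i$ itself.

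First I will handle that subtlety. By construction, $y$ is drawn from the kernel $(x_i, T_i, \gamma) \mapsto \hat p^\gamma(\cdot \sep x_i, T_i)$. So the regular conditional distribution of $y$ given $(x_i, T_i, \gamma)$ is exactly this kernel evaluated at the realized values. This means I can treat $\gamma$ as a fixed number $t \in [0,1]$ when computing the conditional expectation, and then substitute $t = \gamma$. It does not matter whether $\gamma$ is correlated with $T_i$, because everything we condition on is frozen. Measurability of $(t, x, T) \mapsto \hat H$ is clear, since $\hat H$ is a finite combination of the indicators $\indicator{S(y_{ij}\sep x) \geq t}$.

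Next, for a fixed $t$, I will expand the expectation over the atoms of \eqref{eq:empirical-posterior}. Particle $y_{ij}$ carries mass $(1-\beta)\frac{1}{M}\Phi^t(y_{ij}\sep x_i)/\hat Z^t(x_i,T_i)$. If particles coincide, their masses add, and the sum over $j$ handles this automatically. The abstention atom carries mass $\beta\,\Phi^t(\texttt{abstain}\sep x_i)/\hat Z^t = \beta/\hat Z^t$, using $S(\texttt{abstain}\sep x)=1\ge t$. The normalizer satisfies $\hat Z^t \ge \beta > 0$, so these masses are well-defined. Summing $\ell$ against these masses and using $\ell(x_i,\texttt{abstain})=0$ gives exactly the ratio \eqref{eq:empirical-conditional-risk}, that is, $\hat H_i(t)$. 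Substituting $t=\gamma$ finishes the proof.

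I expect no real obstacle. The one step needing care is the first one: justifying that conditioning on a data-dependent $\gamma$ is legitimate. This is what makes the lemma usable later with $\gamma = \hat\tau$, and I would state it explicitly via the kernel definition of the sampling step.
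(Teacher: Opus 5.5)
Your proposal is correct and follows the paper's own route. You condition on $(x_i,T_i,\gamma)$ and read off the finitely many atoms of $\hat p^\gamma(\cdot \sep x_i, T_i)$, with mass $(1-\beta)\Phi^\gamma(y_{ij}\sep x_i)/(M\hat Z^\gamma(x_i,T_i))$ on each particle and $\beta/\hat Z^\gamma(x_i,T_i)$ on \texttt{abstain}, then sum $\ell$ against these masses using $\ell(x_i,\texttt{abstain})=0$. Your extra remarks (the kernel justification for the data-dependent $\gamma$, coinciding particles, and the abstention atom surviving every threshold because $S(\texttt{abstain}\sep x)=1$) make explicit what the paper leaves implicit, without changing the argument.
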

\begin{proof}
Conditional on $(x_i, T_i, \gamma)$, the distribution $\hat p^\gamma(\cdot \sep x_i, T_i)$ is finitely supported on $\{y_{i1}, \ldots, y_{iM}, \texttt{abstain}\}$ with mass $\frac{(1-\beta)\Phi^\gamma(y_{ij}\sep x_i)/M}{\hat Z^\gamma(x_i, T_i)}$ on each $y_{ij}$ and mass $\frac{\beta}{\hat Z^\gamma(x_i, T_i)}$ on $\texttt{abstain}$. Summing $\ell$ against these weights and using $\ell(x_i, \texttt{abstain}) = 0$ yields the result.
\end{proof}

\begin{lemma}
\label{lemma:monotone-h}
The monotonized estimator $\bar{H}_i(\tau) = \sup_{\tau' \geq \tau} \hat{H}_i(\tau')$ satisfies:
\begin{enumerate}[(1)]
    \item $\bar{H}_i(\tau)$ is non-increasing in $\tau$.
    \item $\bar{H}_i(\tau) \geq \hat{H}_i(\tau)$ for all $\tau$.
    \item $\bar{H}_i(\tau) \leq 1$.
\end{enumerate}
\end{lemma}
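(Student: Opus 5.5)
All three properties follow directly from the definition $\bar H_i(\tau) = \sup_{\tau' \geq \tau} \hat H_i(\tau')$, where $\tau'$ ranges over $[0,1]$. We also use that $\hat H_i$ is a conditional expectation of a $\{0,1\}$-valued loss under the empirical posterior $\hat p^{\tau'}(\cdot \sep x_i, T_i)$, as recorded in Lemma~\ref{lemma:factanalogue}. We verify the three items in order. No limiting argument is needed: for fixed $(x_i, T_i)$, $\hat H_i$ is a step function taking finitely many values, so the supremum is in fact a maximum.

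For (1), take $\tau_1 \leq \tau_2$. Then $\{\tau' \in [0,1] : \tau' \geq \tau_2\} \subseteq \{\tau' \in [0,1] : \tau' \geq \tau_1\}$. A supremum over a subset is at most the supremum over the superset, so $\bar H_i(\tau_2) \leq \bar H_i(\tau_1)$. For (2), note that $\tau' = \tau$ itself is admissible in the supremum, which gives $\bar H_i(\tau) \geq \hat H_i(\tau)$.

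For (3), it suffices to show $\hat H_i(\tau') \leq 1$ for every $\tau'$, since the supremum of values bounded by $1$ is bounded by $1$. Write $A = (1-\beta)\frac1M\sum_j \Phi^{\tau'}(y_{ij}\sep x_i)\ell(x_i,y_{ij})$ for the numerator of \eqref{eq:empirical-conditional-risk}. Because $\ell \in \{0,1\}$, we have $0 \leq A \leq (1-\beta)\frac1M\sum_j\Phi^{\tau'}(y_{ij}\sep x_i)$. The denominator adds $\beta > 0$ to this upper bound on $A$, so the ratio lies in $[0,1)$. The same conclusion also follows from Lemma~\ref{lemma:factanalogue}, since an expectation of a $[0,1]$-valued loss lies in $[0,1]$.

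There is no substantive obstacle here. The only point that needs care is the domain of the supremum. If $\tau'$ were allowed to exceed $1$, then $\Phi^{\tau'} \equiv 0$ on the particles, which gives $\hat H_i = 0$ and changes nothing. With scores in $(0,1]$, $\hat H_i$ is well defined on all of $[0,1]$ because $\hat Z^{\tau'} \geq \beta$.
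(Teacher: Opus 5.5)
Your proof is correct and follows the paper's argument essentially verbatim: (1) from the nesting of the index sets of the suprema, (2) by taking $\tau'=\tau$, and (3) by bounding the numerator of $\hat H_i$ by $(1-\beta)\frac{1}{M}\sum_j \Phi^{\tau'}(y_{ij}\sep x_i) \le \hat Z^{\tau'}(x_i,T_i)$ using $\ell\in\{0,1\}$. Your extra remarks, that the supremum is attained and that the bound is strict because $\beta>0$, are correct but not needed.
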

\begin{proof}
(1) If $\tau_1 < \tau_2$, then $\bar H_i(\tau_1) = \sup_{\tau' \geq \tau_1} \hat H_i(\tau') \geq \sup_{\tau' \geq \tau_2} \hat H_i(\tau') = \bar H_i(\tau_2)$. (2) Immediate. (3) The numerator of \eqref{eq:empirical-conditional-risk} is bounded above by $(1-\beta)\frac{1}{M}\sum_j \Phi^\tau(y_{ij}\sep x_i) \leq \hat Z^\tau(x_i, T_i)$, so $\hat H_i(\tau) \leq 1$, and a supremum of values in $[0,1]$ is in $[0,1]$.
\end{proof}

\begin{lemma}
\label{lemma:tau-ordering-estimated}
Let us refer to the test prompt $\xtest$ and tuple $T_{\mathrm{test}}$ as $x_{n+1}$ and $T_{n+1}$ respectively. Define the oracle
\begin{equation}
    \tau^\star = \inf\setof{\tau \sep \frac{1}{n+1}\sum_{i=1}^{n+1} \bar{H}_i(\tau) \leq \alpha}. \label{eq:oracle-estimated}
\end{equation}
Then $\hat\tau \geq \tau^\star$ almost surely.
\end{lemma}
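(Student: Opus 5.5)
The plan mirrors the proof of Lemma~\ref{lemma:tau-ordering-idealized}, with the bound $1/\beta$ on the monotonized importance weight replaced by the bound $\bar H_{n+1}(\tau) \leq 1$ from Lemma~\ref{lemma:monotone-h}(3). The core step is a feasible-set inclusion: every $\tau$ that \Cref{algo:crc} admits is also admitted by the oracle rule~\eqref{eq:oracle-estimated}. Since both thresholds are infima of their feasible sets, inclusion gives $\tau^\star \leq \hat\tau$.

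Concretely, I take any $\tau$ with $\frac{1}{n+1}\sum_{i=1}^{n}\bar H_i(\tau) + \frac{1}{n+1} \leq \alpha$. Lemma~\ref{lemma:monotone-h}(3) gives $\bar H_{n+1}(\tau) \leq 1$ pointwise, for every realization of $(x_{n+1},T_{n+1})$ and every $\tau$. Hence
\[
\frac{1}{n+1}\sum_{i=1}^{n+1}\bar H_i(\tau) \leq \frac{1}{n+1}\sum_{i=1}^{n}\bar H_i(\tau) + \frac{1}{n+1} \leq \alpha,
\]
so $\tau$ is feasible for $\tau^\star$.

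The only point needing care is the edge case where the feasible set of $\hat\tau$ is empty, so that $\hat\tau = \inf\emptyset = +\infty$ by convention. The inequality then holds trivially. Otherwise the inclusion argument applies and holds surely, hence almost surely.

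I would note in passing that nonemptiness does hold when $\alpha \geq 1/(n+1)$. For $\tau > 1$ no particle survives the threshold, since all scores lie in $(0,1]$; then $\hat H_i(\tau)=0$, so $\bar H_i(\tau)=0$, and such $\tau$ is feasible. This observation is not needed for the inequality itself.

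There is no real obstacle here; the main thing to double-check is that the bound $\bar H_{n+1}\leq 1$ is deterministic, which Lemma~\ref{lemma:monotone-h}(3) provides because the abstention mass keeps $\hat Z^\tau$ above the numerator of~\eqref{eq:empirical-conditional-risk}.
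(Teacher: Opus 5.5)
Your proposal is correct and matches the paper's proof: both show that every $\tau$ feasible for $\hat\tau$ is feasible for $\tau^\star$, using the deterministic bound $\bar H_{n+1}(\tau) \leq 1$ from Lemma~\ref{lemma:monotone-h}(3), and then compare the infima. Your treatment of the empty feasible set, where $\inf\emptyset = +\infty$, is a harmless extra detail that the paper leaves implicit.
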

\begin{proof}
For any $\tau$ in the feasible set of $\hat\tau$, \Cref{algo:crc} guarantees
$\frac{1}{n+1}\sum_{i=1}^{n} \bar H_i(\tau) + \frac{1}{n+1} \leq \alpha$.
Since by Lemma~\ref{lemma:monotone-h}, $\bar H_{n+1}(\tau) \leq 1$,
$$
    \frac{1}{n+1}\sum_{i=1}^{n+1} \bar H_i(\tau) \leq \frac{1}{n+1}\sum_{i=1}^{n} \bar H_i(\tau) + \frac{1}{n+1} \leq \alpha,
$$
so $\tau$ is in the feasible set of $\tau^\star$, giving $\tau^\star \leq \hat\tau$ a.s.
\end{proof}

\subsubsection*{Main Proof of Theorem~\ref{thm:main}}
The proof follows the general approach for conformal risk control proofs \cite{angelopoulosconformal}
\begin{proof}
The test response is $\ytest \sim \hat p^{\hat\tau}(\cdot \sep \xtest, T_{\mathrm{test}})$. Write $(x_{n+1}, T_{n+1}) := (\xtest, T_{\mathrm{test}})$ for the test tuple. Since the prompts $(x_i)_{i=1}^{n+1}$ are exchangeable and, conditional on these prompts, the particles in each $T_i$ are drawn i.i.d.\ from $\lmprior(\cdot \sep x_i)$, the tuples $(x_i, T_i)_{i=1}^{n+1}$ are exchangeable as well.

\medskip\noindent\textit{Step 1: Empirical Posterior Risk at $\hat\tau$.}
When $\ytest \sim \hat p^{\hat\tau}(\cdot \sep \xtest, T_{\mathrm{test}})$, Lemma~\ref{lemma:factanalogue} applied with $\gamma = \hat\tau$ gives
$$
    \E[\ell(\xtest, \ytest) \sep \xtest, T_{\mathrm{test}}, \hat\tau] = \hat H_{n+1}(\hat\tau).
$$
Combining with Lemma~\ref{lemma:monotone-h} (2) and taking the outer expectation,
\begin{equation}
    \E[\ell(\xtest, \ytest)] = \E[\hat H_{n+1}(\hat\tau)] \leq \E[\bar H_{n+1}(\hat\tau)]. \label{eq:pessimism-clean}
\end{equation}

\medskip\noindent\textit{Step 2: Transfer from $\hat\tau$ to $\tau^\star$.}
By Lemma~\ref{lemma:tau-ordering-estimated}, $\hat\tau \geq \tau^\star$ a.s. Since $\bar H_{n+1}(\tau)$ is non-increasing in $\tau$, by Lemma~\ref{lemma:monotone-h},
\begin{equation}
    \bar H_{n+1}(\hat\tau) \leq \bar H_{n+1}(\tau^\star), \quad \text{a.s.} \label{eq:monotone-transfer}
\end{equation}

\medskip\noindent\textit{Step 3: Exchangeability.}
The tuples $(x_i, T_i)_{i=1}^{n+1}$ are exchangeable, and $\tau^\star = \tau^\star((x_i, T_i)_{i=1}^{n+1})$ is a symmetric function of these tuples. Conditioning on the multiset $E = \{(x_i, T_i)\}_{i \in [n+1]}$, exchangeability gives
$$
    \E[\bar H_{n+1}(\tau^\star) \sep E] = \frac{1}{n+1}\sum_{i=1}^{n+1} \bar H_i(\tau^\star) \leq \alpha, \quad \text{a.s.},
$$
by the definition of $\tau^\star$ in \eqref{eq:oracle-estimated}. Integrating over $E$,
\begin{equation}
    \E[\bar H_{n+1}(\tau^\star)] \leq \alpha. \label{eq:exchangeability-bound}
\end{equation}

Combining \eqref{eq:pessimism-clean}, \eqref{eq:monotone-transfer}, and \eqref{eq:exchangeability-bound} yields $\E[\ell(\xtest, \ytest)] \leq \alpha$. \qedhere
\end{proof}

\subsection{Proof of Lemma~\ref{lemma:glivenko}}
\label{appendix:empirical-to-population}

Before proving the lemma, we will need a technical lemma bounding the bracketing number
\cite{wellner} of some function classes. Let us denote briefly by
\(P_{Y,S \mid x}\) the conditional joint distribution of responses and scores given
a prompt \(x\).\footnote{We only model the joint distribution to handle randomized
scorers; for deterministic scorers this is not necessary and a measure over
\(\mathcal V^*\) suffices.}

\begin{lemma}
    \label{lemma:bracketing}
    Assume the distribution of the scores is continuous\footnote{We enforce this by adding noise to the elicited scores.}. Let \(h: \mathcal{V}^* \times \mathcal{S} \rightarrow \{0,1\}\) be a fixed
    measurable map. Define the function class
    \[
        \mathcal{G}
        =
        \left\{
        (y,s) \mapsto h(y,s)\indicator{s \geq \tau}
        \;\middle|\;
        \tau \in [0,1]
        \right\},
    \]
    and identify its elements as \(g_\tau\). Then, for every \(\epsilon>0\),
    \[
        N_{[\,]}\!\left(
        \epsilon,
        \mathcal G,
        \norm{\cdot}_{L_1(P_{Y,S\mid x})}
        \right)
        < \infty .
    \]
\end{lemma}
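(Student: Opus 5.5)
We will build the $\epsilon$-brackets directly from the quantiles of the score distribution. This is the standard argument that one-dimensional threshold classes have finite bracketing numbers; the only new ingredient is the fixed multiplier $h$.

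Fix $x$ and $\epsilon>0$, and write $F(t) = P_{Y,S\mid x}(S < t)$. By the continuity assumption on the scores, $F$ is continuous and non-decreasing on $[0,1]$, with $F(0)=0$ and $F(1)=1$ (we may take the score law to be supported in $[0,1]$, since $\mathcal S \subseteq (0,1]$). Using the intermediate value theorem, choose a grid $0 = t_0 < t_1 < \cdots < t_N = 1$ with $N \le \lceil 1/\epsilon\rceil$ such that
\[
F(t_k) - F(t_{k-1}) \le \epsilon
\]
for every $k$. For instance, let $t_k$ be the smallest point with $F(t_k) = k\epsilon$, capped at $1$.

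For each $k$, define the bracket $[l_k, u_k]$ by
\[
l_k(y,s) = h(y,s)\indicator{s \ge t_k}, \qquad u_k(y,s) = h(y,s)\indicator{s \ge t_{k-1}}.
\]
For any $\tau \in [t_{k-1}, t_k]$, the indicators are monotone in the threshold, so $\indicator{s\ge t_k} \le \indicator{s \ge \tau} \le \indicator{s\ge t_{k-1}}$. Because $h \ge 0$, multiplying through by $h$ gives $l_k \le g_\tau \le u_k$ pointwise. Hence the $N$ brackets cover $\mathcal G$.

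For the bracket size, since $0 \le h \le 1$,
\[
\norm{u_k - l_k}_{L_1(P_{Y,S\mid x})} = \E\big[h(Y,S)\indicator{t_{k-1} \le S < t_k}\big] \le F(t_k) - F(t_{k-1}) \le \epsilon.
\]
All the functions involved are measurable and bounded, so they lie in $L_1$. This gives $N_{[\,]}(\epsilon,\mathcal G, L_1) \le \lceil 1/\epsilon\rceil < \infty$.

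The only delicate point is the choice of grid, which requires $F$ to have no atoms exceeding $\epsilon$. Continuity of the scores rules out such atoms, and this is exactly why the perturbation by $\gamma$ noise is introduced. If the scores had atoms, we would instead use the left-limit version of the lower bracket, $l_k(y,s) = h(y,s)\indicator{s > t_{k}^-}$, but the continuity assumption makes this unnecessary. No other step poses difficulty.
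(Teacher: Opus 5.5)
Your proof is correct and is essentially the paper's argument. Monotonicity of $\tau \mapsto \indicator{s \ge \tau}$ together with $h \ge 0$ gives the brackets $[g_{t_k}, g_{t_{k-1}}]$; their $L_1$-widths are the masses of $[t_{k-1}, t_k)$, and partitioning $[0,1]$ under an atomless measure makes each width at most $\epsilon$. The only cosmetic difference is that you bound the widths by the unweighted score CDF via $h \le 1$, which gives the explicit count $\lceil 1/\epsilon \rceil$, whereas the paper partitions directly under the $h$-weighted measure $\mu_h(A) = \E[h(Y,S)\indicator{S \in A}]$.
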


\begin{proof}
Let \(P=P_{Y,S\mid x}\). For \(\tau\in[0,1]\), write
\[
    g_\tau(y,s)
    =
    h(y,s)\indicator{s\ge \tau}.
\]
Fix \(\tau_1\le \tau_2\). Then for every \(\tau\in[\tau_1,\tau_2]\),
\[
    g_{\tau_2}(y,s)
    =
    h(y,s)\indicator{s\ge \tau_2}
    \le
    h(y,s)\indicator{s\ge \tau}
    \le
    h(y,s)\indicator{s\ge \tau_1}
    =
    g_{\tau_1}(y,s),
\]
where we used that \(h\ge 0\). Hence the bracket
\[
    [g_{\tau_2},g_{\tau_1}]
\]
contains all functions \(g_\tau\) with \(\tau\in[\tau_1,\tau_2]\).

The \(L_1(P)\)-width of this bracket is
\begin{align*}
    \norm{g_{\tau_1}-g_{\tau_2}}_{L_1(P)}
    &=
    \mathbb E_P\left[
        h(Y,S)
        \left(
        \indicator{S\ge \tau_1}
        -
        \indicator{S\ge \tau_2}
        \right)
    \right] \\
    &=
    \mathbb E_P\left[
        h(Y,S)\indicator{\tau_1\le S<\tau_2}
    \right].
\end{align*}
Define the finite measure \(\mu_h\) on \([0,1]\) by
\[
    \mu_h(A)
    =
    \mathbb E_P\left[
        h(Y,S)\indicator{S\in A}
    \right].
\]
Since \(h\in\{0,1\}\), we have \(\mu_h([0,1])\le 1\). With this notation,
\[
    \norm{g_{\tau_1}-g_{\tau_2}}_{L_1(P)}
    =
    \mu_h([\tau_1,\tau_2)).
\]

Now fix \(\epsilon>0\). Since \(\mu_h\) is a finite measure on \([0,1]\), we can
partition \([0,1]\) into finitely many intervals \(I_k=[a_k,a_{k+1})\) such that each interval has \(\mu_h\)-mass at
most \(\epsilon\). Since the scores are continuous, this measure has no atoms. For each such interval \(I_k=[a_k,a_{k+1})\), the bracket
\[
    [g_{a_{k+1}},g_{a_k}]
\]
has \(L_1(P)\)-width
\[
    \norm{g_{a_k}-g_{a_{k+1}}}_{L_1(P)}
    =
    \mu_h([a_k,a_{k+1}))
    \le \epsilon.
\]
Thus finitely many \(L_1(P)\)-brackets of width at most \(\epsilon\) cover
\(\mathcal G\). Therefore
\[
    N_{[\,]}\!\left(
    \epsilon,
    \mathcal G,
    \norm{\cdot}_{L_1(P_{Y,S\mid x})}
    \right)
    <\infty.
\]
\end{proof}

We are ready to prove Lemma~\ref{lemma:glivenko}.

Fix a prompt $x$ and abstention mass $\beta > 0$. Write $T_M = (y_1, \ldots, y_M)$ with $y_j \stackrel{\text{i.i.d.}}{\sim} \lmprior(\cdot \sep x)$. Recall $\Phi^\tau(y \sep x) = \indicator{S(y \sep x) \geq \tau}$ where $S(y \sep x) = \prod_i s_i$ is the completion-level score. Define the quantities
$$
    \hat A_M(\tau) := \frac{1}{M}\sum_{j=1}^{M} \Phi^\tau(y_j \sep x)\, \ell(x, y_j), \qquad A(\tau) := \E_{y \sim p_{\llm}}[\Phi^\tau(y \sep x)\, \ell(x, y)],
$$
$$
    \hat B_M(\tau) := \frac{1}{M}\sum_{j=1}^{M} \Phi^\tau(y_j \sep x), \qquad B(\tau) := \E_{y \sim p_{\llm}}[\Phi^\tau(y \sep x)].
$$
 The empirical and population mixture normalizers are
$$
    \hat Z^\tau_M = \beta + (1-\beta)\hat B_M(\tau), \qquad Z^\tau = \beta + (1-\beta) B(\tau),
$$
both in $[\beta, 1]$. We dropped the dependence on $x$, as we fixed this variable already.

\medskip\noindent\textit{Step 1: Uniform convergence via Glivenko--Cantelli.}
Instantiating $h(y,s) = \ell(x,y)$ and $h(y,s) = 1$ shows that $\{y \mapsto 1\{S(y\mid x)\ge \tau\}:\tau\in[0,1]\}$ and $
\{y\mapsto \ell(x,y)1\{S(y\mid x)\ge\tau\}:\tau\in[0,1]\}
$ have finite Bracketing number, by Lemma~\ref{lemma:bracketing}. By Theorem 2.4.1. in \cite{wellner}, both of these classes are therefore Glivenko-Cantelli. Therefore
\begin{equation}
    \sup_{\tau \in [0,1]} \abs{\hat A_M(\tau) - A(\tau)} \xrightarrow[M\to\infty]{\text{a.s.}} 0,
    \qquad
    \sup_{\tau \in [0,1]} \abs{\hat B_M(\tau) - B(\tau)} \xrightarrow[M\to\infty]{\text{a.s.}} 0. \label{eq:gc-uniform}
\end{equation}

\medskip\noindent\textit{Step 2: Uniform convergence of the empirical-posterior risk.}
Recall that the empirical and population conditional risks are given by
$$
\hat R_M = \E_{y \sim \hat p^\tau(\cdot \sep x, T_M)}[\ell(x, y)] \qquad \text{and} \qquad R_M = \E_{y \sim p^\tau(\cdot \sep x)}[\ell(x, y)].
$$
This can be rewritten as
$$
    \hat R_M(x, \tau) = \frac{(1-\beta)\hat A_M(\tau)}{\hat Z^\tau_M},
    \qquad
    R(x, \tau) = \frac{(1-\beta) A(\tau)}{Z^\tau}.
$$

Adding and subtracting \((1-\beta)\hat A_M(\tau)/Z^\tau\), we get
\begin{align*}
    \abs{\hat R_M(x,\tau) - R(x,\tau)}
    &=
    \abs{
    \frac{(1-\beta)\hat A_M(\tau)}{\hat Z^\tau_M}
    -
    \frac{(1-\beta)A(\tau)}{Z^\tau}
    } \\
    &=
    \abs{
    \frac{(1-\beta)\hat A_M(\tau)}{\hat Z^\tau_M}
    -
    \frac{(1-\beta)\hat A_M(\tau)}{Z^\tau}
    +
    \frac{(1-\beta)\hat A_M(\tau)}{Z^\tau}
    -
    \frac{(1-\beta)A(\tau)}{Z^\tau}
    } \\
    &\leq
    \abs{
    \frac{(1-\beta)\hat A_M(\tau)}{Z^\tau}
    -
    \frac{(1-\beta)A(\tau)}{Z^\tau}
    }
    +
    \abs{
    \frac{(1-\beta)\hat A_M(\tau)}{\hat Z^\tau_M}
    -
    \frac{(1-\beta)\hat A_M(\tau)}{Z^\tau}
    } \\
    &=
    \frac{1-\beta}{Z^\tau}\abs{\hat A_M(\tau)-A(\tau)}
    +
    (1-\beta)\hat A_M(\tau)
    \abs{\frac{1}{\hat Z^\tau_M}-\frac{1}{Z^\tau}} \\
    &=
    \frac{1-\beta}{Z^\tau}\abs{\hat A_M(\tau)-A(\tau)}
    +
    (1-\beta)\hat A_M(\tau)
    \frac{\abs{\hat Z^\tau_M-Z^\tau}}{\hat Z^\tau_M Z^\tau}.
\end{align*}
Since
\[
    \hat Z^\tau_M-Z^\tau
    =
    \bigl[\beta+(1-\beta)\hat B_M(\tau)\bigr]
    -
    \bigl[\beta+(1-\beta)B(\tau)\bigr]
    =
    (1-\beta)(\hat B_M(\tau)-B(\tau)),
\]
and since \(\hat A_M(\tau)\le 1\), \(Z^\tau\ge \beta\), and
\(\hat Z^\tau_M Z^\tau\ge \beta^2\), it follows that
\begin{align*}
    \abs{\hat R_M(x,\tau)-R(x,\tau)}
    &\leq
    \frac{1-\beta}{\beta}\abs{\hat A_M(\tau)-A(\tau)}
    +
    \frac{(1-\beta)^2}{\beta^2}
    \abs{\hat B_M(\tau)-B(\tau)}.
\end{align*}

Taking the supremum over $\tau \in [0,1]$ on both sides and applying \eqref{eq:gc-uniform},
\begin{equation}
    \sup_{\tau \in [0,1]} \abs{\hat R_M(x, \tau) - R(x, \tau)} \xrightarrow[M\to\infty]{\text{a.s.}} 0. \label{eq:uniform-risk}
\end{equation}
The same argument applies for $\covdist$-a.e.\ $x$, completing the proof of Lemma~\ref{lemma:glivenko}.

\subsection{Proof of Corollary~\ref{corollary:limsup}}
Since the guarantee above holds uniformly over $\tau$, it in particular holds for the selected threshold $\hat\tau_M \in [0,1]$ from Algorithm~\ref{algo:crc} when run with $M$ samples from $p_{\llm}$:
$$
    \abs{\hat R_M(\xtest, \hat\tau_M) - R(\xtest, \hat\tau_M)} \leq \sup_{\tau \in [0,1]} \abs{\hat R_M(\xtest, \tau) - R(\xtest, \tau)}.
$$
So by \eqref{eq:uniform-risk} and by the bounded convergence theorem,
$$
    \E\!\left[\sup_{\tau \in [0,1]} \abs{\hat R_M(\xtest, \tau) - R(\xtest, \tau)}\right] \xrightarrow[M\to\infty]{} 0.
$$

By Theorem 3.2,
\[
\mathbb E[\hat R_M(x_{\rm test},\hat\tau_M)]\le \alpha.
\]
Therefore, for \(y_{\rm test}^{\rm pop}\sim p^{\hat\tau_M}(\cdot\mid x_{\rm test})\),
\[
\mathbb E[\ell(x_{\rm test},y_{\rm test}^{\rm pop})]
=
\mathbb E[R(x_{\rm test},\hat\tau_M)]
\le
\alpha+o(1),
\]
and hence
\[
\limsup_{M\to\infty}
\mathbb E[\ell(x_{\rm test},y_{\rm test}^{\rm pop})]\le \alpha.
\]

\section{Design Choices for the Conformal Routine}
\label{appendix:oracles}

This appendix provides motivation and design rationale for the calibration rule in Section~\ref{sec:calibration}. The arguments here are not needed for the validity proofs of Proposition~\ref{prop:idealized} and Theorem~\ref{thm:main}. Instead, they explain how the rule arises from an idealized known-normalizer setting and why we use a fixed-denominator, monotonized objective. We highlight the connections to traditional approaches to conformal prediction with (fixed) covariate and label shift. %

The calibration procedures developed in Section~\ref{sec:calibration} are intuitively derived from the oracles $\idealizedoraclerule$ and $\tau^\star$ in \eqref{eq:noiselessoracle} and \eqref{eq:oracle-estimated} respectively, both of which use a \emph{fixed} denominator of $n+1$ and importance-reweights prior samples to evaluate/estimate the posterior risk. A natural alternative is to use an \emph{adaptive} (self-normalized) denominator, as is common in the distribution-shift conformal prediction literature \citep{Tibshirani2019ConformalPU, Angelopoulos2024TheoreticalFO, Fannjiang2022ConformalPU, Prinster2024ConformalVG}. Fundamentally, this oracle arises as follows. Suppose that $U_i \sim P$ for $i \in [n]$ and $U_{n+1} \sim Q$. Then conditional on the multiset of observed values $E = \setof{U_1,\ldots,U_n,U_{n+1}}$ 
$$
    U_{n+1} \sep E \sim \sum_{i=1}^{n+1} v_i \delta_{U_i} \quad \text{where} \quad v_i = \frac{\frac{dQ(U_i)}{dP(U_i)}}{\sum_{i=1}^{n+1} \frac{dQ(U_i)}{dP(U_i)}}.
$$
This can be found in Proposition 7.6. of \cite{Angelopoulos2024TheoreticalFO}. Extensions exist under Feedback covariate shift \cite{Fannjiang2022ConformalPU} and more arbitrary dependencies \cite{Prinster2024ConformalVG}. As a consequence, if we had $x_i \sim \covdist$ for $i \in [n+1]$ and $y_{n+1} \sim p^\tau(\cdot \sep x_{n+1})$ for \emph{fixed} $\tau$, we'd have
$$
    (x_{n+1},y_{n+1}) \sep E \sim \sum_{i=1}^{n+1} v_i \delta_{(x_i,y_i)} \quad \text{ where } v_i = \frac{\frac{\Phi^\tau(y_i \sep x_i)}{Z^\tau(x_i)}}{\sum_{j=1}^{n+1} \frac{\Phi^\tau(y_j \sep x_j)}{Z^\tau(x_j)}}
$$

In the known $Z^\tau$ case, this could suggest\footnote{Ignoring for a second that monotonizing this objective is also trickier than if the denominator is fixed.} an oracle selection rule 
$$
\tau^*_{\mathrm{SN}} = \inf\setof{\tau \sep \frac{\sum_{i=1}^{n+1} \frac{\Phi^\tau(y_i \sep x_i)}{Z^\tau(x_i)}\ell(x_i,y_i) }{\sum_{i=1}^{n+1} \frac{\Phi^\tau(y_i \sep x_i)}{Z^\tau(x_i)}} \leq \alpha},
$$
Intuitively, $\tau^*_{\mathrm{SN}}$ computes a weighted average of the losses, with weights proportional to each point's likelihood under the posterior $p^\tau$. One issue however, is that this is ill defined. For the weighted exchangeability to make sense, $y_{n+1}$ now needs to be sampled according to $p^\tau$. However, $\tau$ depends on $y_{n+1}$. This creates a circular dependency, that the choice of $\idealizedoraclerule$ and $\tau^\star$ is able to sidestep, by depending only on a hallucinated calibration set completions from the \emph{prior}.

\subsubsection*{Population-Level Equivalence}
We might still be curious about why $\idealizedoraclerule$ and $\tau_{\mathrm{SN}}^*$ are different, even though both seem principled. In fact, the expected denominator in $\tau^*_{\mathrm{SN}}$ coincides with that in $\idealizedoraclerule$. Writing $w_i(\tau) = \Phi^\tau(y_i \mid x_i) / Z^\tau(x_i)$, a direct calculation using the tower rule gives
\begin{align*}
    \E\!\left[\sum_{i=1}^{n+1} w_i(\tau)\right]
    &= \sum_{i=1}^{n+1} \E\!\left[\E\!\left[\frac{\Phi^\tau(Y_i \mid X_i)}{Z^\tau(X_i)} \,\bigg|\, X_i\right]\right]
    = \sum_{i=1}^{n+1} \E\!\left[\frac{Z^\tau(X_i)}{Z^\tau(X_i)}\right]
    = n+1.
\end{align*}
Thus, in expectation, the self-normalized denominator equals $n+1$. The two oracles can therefore be expected to behave similarly in large samples. Indeed, empirical experimentation has shown us that their resulting approximations behave very similarly in practice too. As our theoretical analysis employs $\idealizedoraclerule$, our Algorithm~\ref{algo:crc} also uses a fixed denominator.

\section{Extensive Experimental Results}
\label{appendix:add-results}

\subsection{MATH Calibration Validity}
\label{appendix:math-calibration}

\Cref{fig:calibration-appendix} shows the results of the empirical validity experiment (\Cref{fig:calibration}) for both FActScore and MATH. MATH mirrors FActScore: empirically, valid coverage is achieved at all levels, with a flatlining region at targets low enough that both methods degrade to sampling from the prior.

\begin{figure}[h]
    \centering
    \begin{subfigure}[b]{0.45\textwidth}
        \includegraphics[width=\linewidth]{full-paper/figures/factscore_pdfs/factual_or_abstain.pdf}
        \caption{FActScore.}
        \label{fig:calibration-factscore-appendix}
    \end{subfigure}\hfill
    \begin{subfigure}[b]{0.45\textwidth}
        \includegraphics[width=\linewidth]{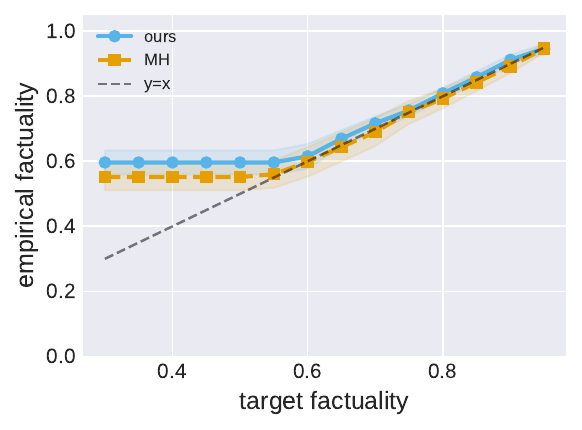}
        \caption{MATH.}
        \label{fig:calibration-math}
    \end{subfigure}
    \caption{Target vs.\ observed factuality on FActScore (a) and MATH (b); ours (blue) and \texttt{MH} (orange). Both methods produce empirically valid coverage on both tasks. Averaged over 10 splits; shaded area shows 95\% CIs.}
    \label{fig:calibration-appendix}
\end{figure}

\subsection{Per-statement Likert Scores}
\label{appendix:likert-axes}

\Cref{fig:utility-likert-axes} breaks down the FActScore downstream utility results from \Cref{fig:utility-downstream} across the three axes (see Appendix~\ref{appendix:judge-prompts}). These results reveal that even at high targets, the LM judge considers non-abstaining outputs from both methods to be fluent and coherent. However, as the target factuality increases, outputs from \texttt{MH} quickly degrade in terms of their completeness and helpfulness. Note that \Cref{fig:utility-downstream} is the arithmetic mean of these axes.

\begin{figure}[h!]
    \centering
    \begin{subfigure}[b]{0.31\textwidth}
        \includegraphics[width=\linewidth]{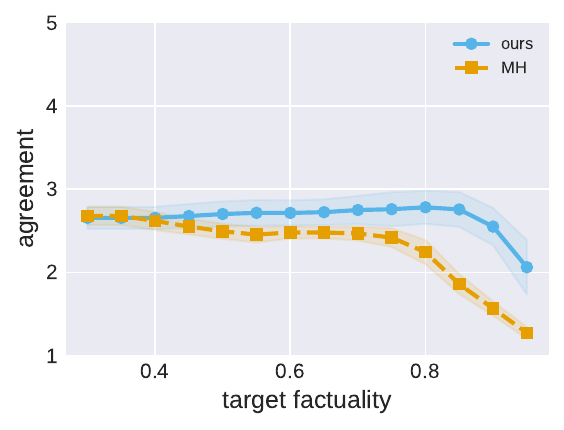}
        \caption{Completeness\,$\mid$\,non-abstaining.}
    \end{subfigure}\hfill
    \begin{subfigure}[b]{0.31\textwidth}
        \includegraphics[width=\linewidth]{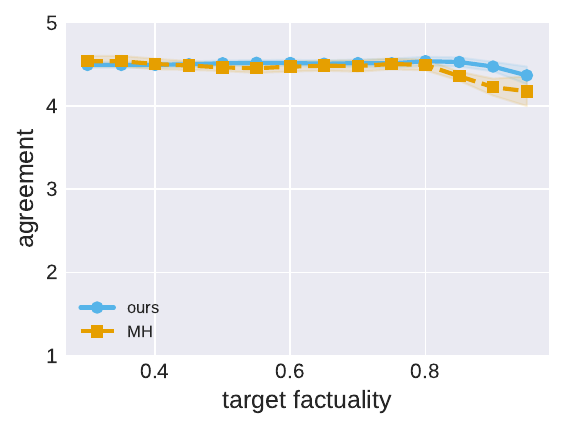}
        \caption{Fluency\,$\mid$\,non-abstaining.}
    \end{subfigure}\hfill
    \begin{subfigure}[b]{0.31\textwidth}
        \includegraphics[width=\linewidth]{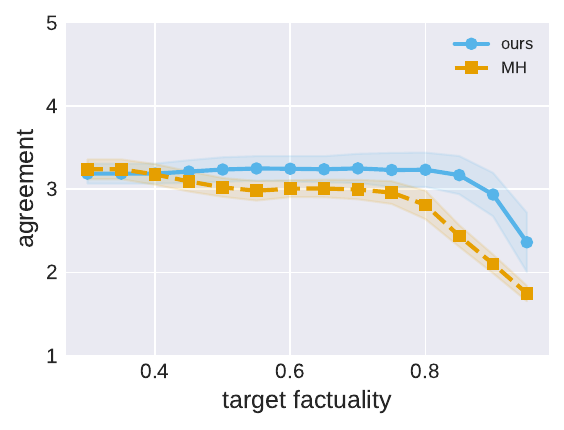}
        \caption{Helpfulness\,$\mid$\,non-abstaining.}
    \end{subfigure}
    \caption{Breakdown of the FActScore Likert results from \Cref{fig:utility-downstream} (panel~b) per statement that the judge is asked to rate its agreement with: (a) whether the biography covers the most important facts a reader would expect to find in a short biography; (b) whether the biography reads as coherent prose with smooth transitions and no abrupt or disjointed jumps; and (c) whether this would be a satisfactory and helpful answer if a user asked for a short biography of the entity. The main-text Likert score is the per-output average over these three. Conditioned on non-abstention; averaged over 10 splits; shaded area shows 95\% CI.}
    \label{fig:utility-likert-axes}
\end{figure}

\subsection{Alternative scoring functions and base models}
\label{appendix:scorer-elicit-vs-logprob}

\begin{figure}[h]
    \centering
    \begin{subfigure}[b]{0.45\textwidth}
        \includegraphics[width=\linewidth]{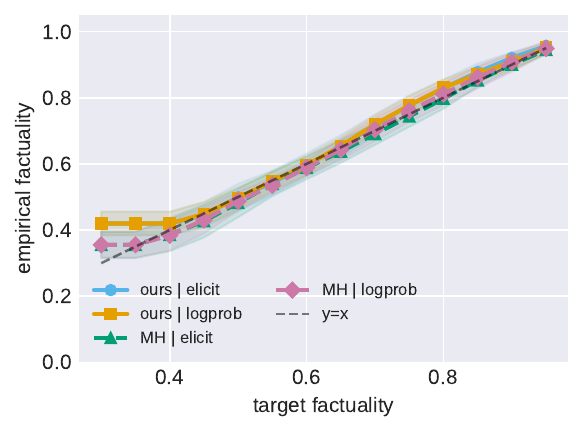}
        \caption{Target vs. observed factuality.}
    \end{subfigure}\hfill
    \begin{subfigure}[b]{0.45\textwidth}
        \includegraphics[width=\linewidth]{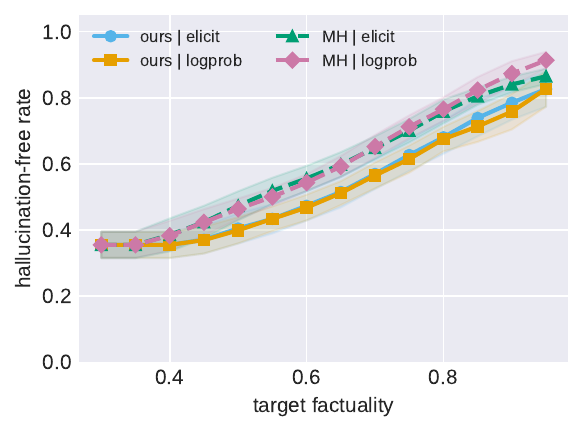}
        \caption{Strict factuality of non-abstaining outputs.}
    \end{subfigure}
    \caption{Scorer-elicitation ablation on FActScore: confidence-rating elicitation (main text) vs.\ raw token log-probabilities. Averaged over 10 splits; shaded area shows 95\% CI.}
    \label{fig:scorer-elicit-vs-logprob}
\end{figure}

The main text elicits per-claim scores for Llama-3.3-70B-Instruct generations by asking GPT-4o to rate its confidence on a $0$-$1$ scale (\Cref{sec:instantiating-scoring}).
A natural, cheaper alternative is to read off the scorer's own token log-probabilities for a forced \texttt{T}/\texttt{F} judgement; we compute this scorer as $s = (p_\texttt{T} + p_\texttt{N})/(p_\texttt{T} + p_\texttt{N} + p_\texttt{F})$ using GPT-4o.
\Cref{fig:scorer-elicit-vs-logprob} shows the impact that this alternative formulation has on calibration validity and sample utility. Both scorers produce empirically valid calibration curves, as predicted.
Furthermore, for our method both scorers yield similar utilities; for \texttt{MH}, the logprob-based scorer yields slightly higher strict factuality among non-abstaining outputs at high targets in this experiment.

In \Cref{fig:scorer-strength}, we perform a related comparison: replacing both the base model and the scorer with smaller (and thus cheaper, but less capable) variants within the same family.
Specifically, we gather generations from Llama-3.1-8B-Instruct \citep{grattafiori2024llama}, which we then score using GPT-4o-mini instead of GPT-4o.
We fix the rest of the experimental setting as in \Cref{sec:experiments}.

As one can observe, validity is agnostic to the capacity of the base model and the scorer, while downstream utility can be affected somewhat, especially at higher target rates.
Overall, however, these differences are small, indicating that calibrated factuality successfully translates to downstream utility.

\begin{figure}[h]
    \centering
    \begin{subfigure}[b]{0.45\textwidth}
        \includegraphics[width=\linewidth]{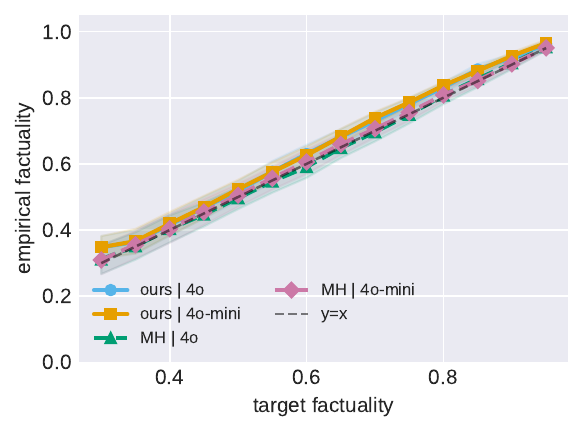}
        \caption{Target vs. observed factuality.}
    \end{subfigure}\hfill
    \begin{subfigure}[b]{0.45\textwidth}
    \includegraphics[width=\linewidth]{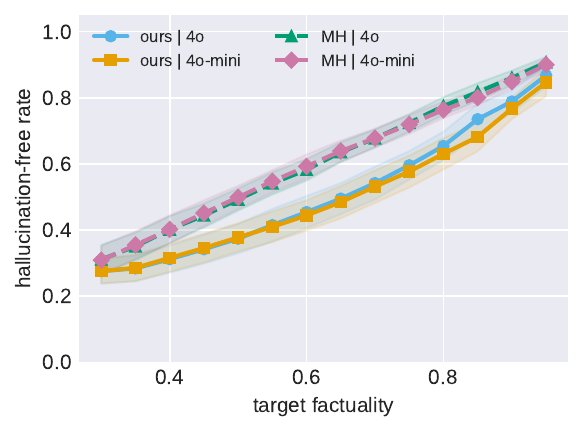}
        \caption{Strict factuality of non-abstaining outputs.}
    \end{subfigure}
    \caption{Model capacity ablation on MATH: GPT-4o vs.\ GPT-4o-mini scoring, all on Llama-3.1-8B-Instruct generations (in contrast to the larger Llama-3.3-70B-Instruct used elsewhere). Averaged over 10 splits; shaded area shows 95\% CI.}
    \label{fig:scorer-strength}
\end{figure}

\subsection{Ablating $\beta$ at $M = 20$ with fixed-budget rejection sampling}
\label{appendix:rs-ablation}

This appendix probes how our finite-particle guarantee for rejection sampling (Lemma~\ref{lemma:glivenko} and Corollary~\ref{corollary:limsup}) interacts with $\beta$ and a fixed-budget rejection sampling deployment procedure.
We fix $M = 20$ as in the main experiments, but at deployment we run rejection sampling up to a hard cap of $50$ attempts; if a non-abstaining, non-accepted generation has not been reached within this budget, we return a fallback sample.
We consider two fallback choices that intuitively give upper and lower bounds on the unknown behavior of the true mixture posterior:
\begin{itemize}
    \item \textbf{Worst-case fallback}: a synthetic sample consisting of a single claim which is incorrect but has a score $1.0$, contributing maximally to the risk estimate. This deliberately pessimistic view assumes that every budget-exhausted prompt would produce a fully false, accepted output.
    \item \textbf{Abstention fallback}: when the budget is exhausted, the \texttt{abstain} response is deterministically sampled. Since this contributes zero to the loss/risk estimate, this is an optimistic view motivated by the observation that the mixture prior would have eventually emitted an abstention had we kept drawing. This fallback would also likely be the choice of practitioners using budget-constrained sampling techniques, especially in deployment settings where abstention is favored over returning a possibly incorrect generation.
\end{itemize}
We contrast these two fallbacks as the true behavior under indefinite rejection sampling lies somewhere between the two.
By comparing the two, we can thus get a sense of how far the asymptotic guarantee of Corollary~\ref{corollary:limsup} can be from being empirically valid at $M = 20$ under different values of $\beta$. However, note that fully characterizing the precise support of the posterior is not possible with finite sample sizes; these fixed-budget fallbacks only give a coarse and loose approximation.

\begin{figure}[h]
    \centering
    \begin{subfigure}[b]{0.32\textwidth}
        \includegraphics[width=\linewidth]{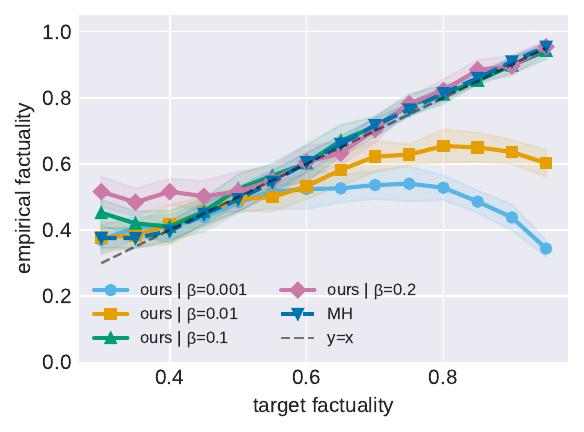}
        \caption{Target vs.\ observed factuality (worst-case fallback).}
        \label{fig:rs-ablation-calibration}
    \end{subfigure}\hfill
    \begin{subfigure}[b]{0.32\textwidth}
        \includegraphics[width=\linewidth]{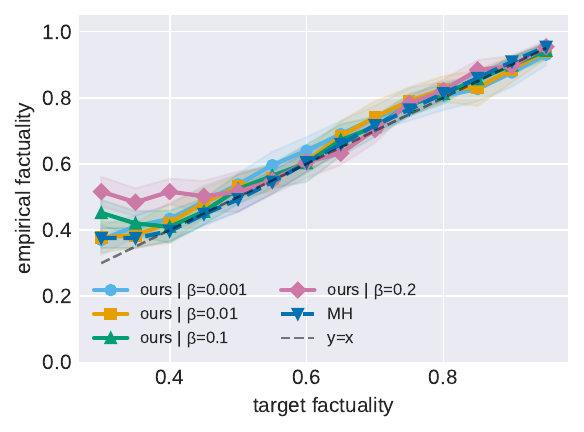}
        \caption{Target vs.\ observed factuality (abstention fallback).}
        \label{fig:rs-ablation-calibration-best}
    \end{subfigure}\hfill
    \begin{subfigure}[b]{0.32\textwidth}
        \includegraphics[width=\linewidth]{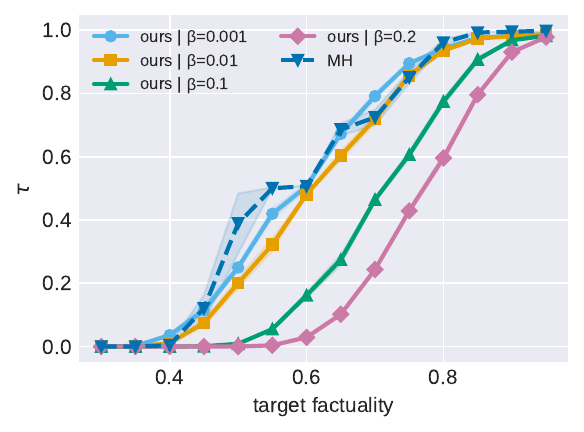}
        \caption{Calibrated threshold $\hat\tau$ (does not depend on the fallback).}
        \label{fig:rs-ablation-tau}
    \end{subfigure}
    \caption{Fixed-budget rejection-sampling $\beta$-ablation on FActScore. (a, b) Target vs.\ empirical factuality under the worst-case (fully-false) and abstention fallbacks; (c) Calibrated $\hat\tau$ vs.\ target (identical for both fallbacks, since calibration is unaffected). Averaged over 10 splits; shaded area shows 95\% CI.}
    \label{fig:rs-ablation-validity}
\end{figure}

\xhdr{Calibration validity}
\Cref{fig:rs-ablation-calibration,fig:rs-ablation-calibration-best} show target factuality $1 - \alpha$ against observed factuality on the held-out test split for different values of $\beta$, under the worst-case and abstention fallbacks respectively.
At a relatively low calibration budget of $M = 20$, as $\beta \downarrow 0$ the calibration guarantee begins to fall apart under the worst-case fallback, yielding undercalibrated responses at high targets due to an increasing fraction of prompts hitting the budget cap; at $\beta = 0.1$ and $\beta = 0.2$, coverage is empirically valid and tight. These correspond to prompts for which the language model has a hard time generating responses that pass the scorer threshold. We conjecture that with a reliable scorer, we would eventually return an abstention particle, but this is impossible to verify under this sampling design.
Under the abstention fallback, by contrast, coverage holds across the whole range of $\beta$.

\begin{figure}[h]
    \centering
    \begin{subfigure}[b]{0.24\textwidth}
        \includegraphics[width=\linewidth]{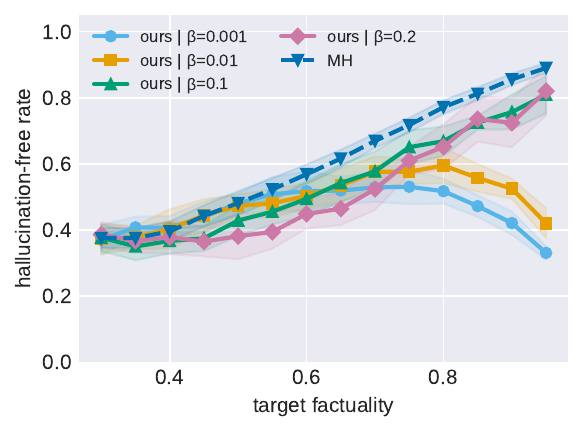}
        \caption{Strict fact.\,$\mid$\,non-abst.\ (worst-case).}
    \end{subfigure}\hfill
    \begin{subfigure}[b]{0.24\textwidth}
        \includegraphics[width=\linewidth]{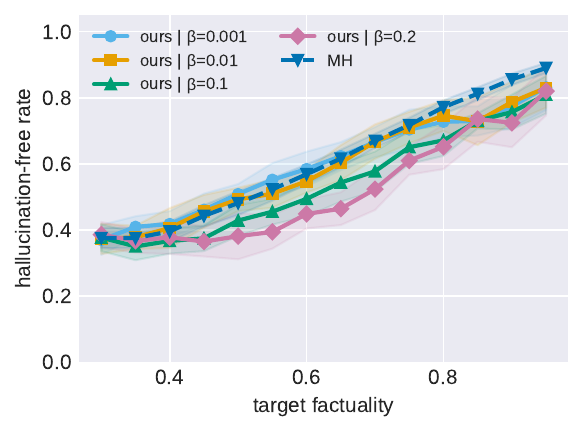}
        \caption{Strict fact.\,$\mid$\,non-abst.\ (abst. fallback).}
    \end{subfigure}\hfill
    \begin{subfigure}[b]{0.24\textwidth}
        \includegraphics[width=\linewidth]{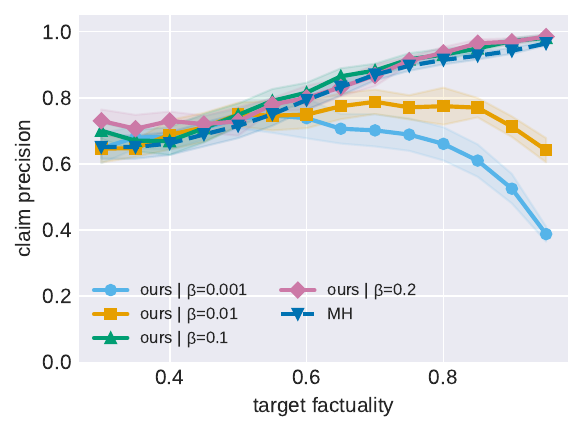}
        \caption{Claim precision (worst-case).}
    \end{subfigure}\hfill
    \begin{subfigure}[b]{0.24\textwidth}
        \includegraphics[width=\linewidth]{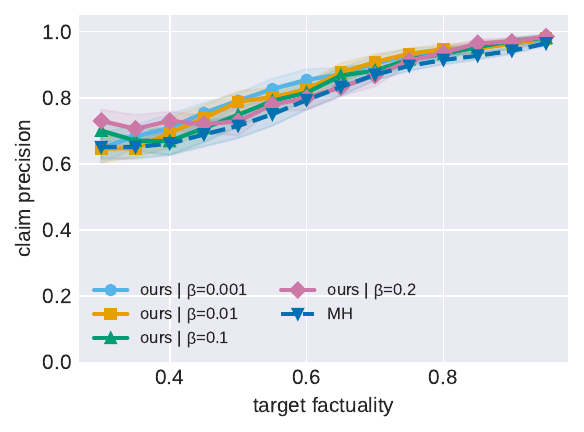}
        \caption{Claim precision (abst. fallback).}
    \end{subfigure}
    \caption{Proxy metrics under the fixed-budget rejection sampling procedure on FActScore, comparing the worst-case (fully-false) and \texttt{abstention} budget-exhausted fallbacks.}
    \label{fig:rs-ablation-proxies}
\end{figure}

\xhdr{Proxy metrics}
\Cref{fig:rs-ablation-proxies} reports two factuality proxy metrics from the main experiments, evaluated under the fixed-budget rejection-sampling procedure rather than the analytic plug-in, again under both fallbacks.
Under the worst-case fallback, the proxy metrics degrade as calibration fails at low $\beta$, mirroring the calibration plots.
Under the abstention fallback, both strict factuality among non-abstaining samples and claim precision remain high across the whole range of $\beta$, since the budget-exhausted prompts do not affect the metrics.
Generally speaking, except for in the instances of worst-case fallback-induced miscalibration, 
lower $\beta$ leads to a closer gap in strict factuality among non-abstaining samples; this is because a lower forced probability of abstention in the mixture prior yields higher values of $\hat\tau$ (\Cref{fig:rs-ablation-tau}), thus applying stricter checks on the outputs.

\section{Prompts}
\label{appendix:prompts}

\subsection{Generation prompts}
\label{appendix:generation-prompts}

We do not set a special system prompt for generation; the dataset-specific prompts below are passed as the user's first turn.

\xhdr{FActScore}
\begin{quote}\small
You are a helpful assistant that writes biographies. You will be given the name of a well-known entity, such as a historical figure, scientist, artist, or music group. Write a concise biography of that entity. Include only facts you are confident are correct, and omit details you are uncertain about. Brief biographies are fine, do not pad with guesses. Keep each sentence focused on a single fact. Prefer short, atomic sentences over long sentences with multiple clauses.

Example:\\
Tell me a bio of Albert Einstein.\\
Albert Einstein was a famed physicist. He was born in Germany in 1879. Einstein is best known for developing the theory of relativity. He published the equation E=mc\textsuperscript{2} in 1905. In 1921, he won the Nobel Prize in Physics. He died in 1955.

Now:\\
Tell me a bio of \{entity\}.
\end{quote}

Where the \emph{entity} is the entity's name, as given in the dataset.

\xhdr{MATH}
\begin{quote}\small
You are a mathematical reasoning assistant. Solve problems step by step in a concise manner. IMPORTANT: Use exactly one sentence per ATOMIC logical reasoning step, NEVER chain more than one step in a sentence. Only include reasoning steps which you are confident are correct.

Example:\\
Problem: Find the value of $x$ such that $\sqrt{x + 7} = 9$.\\
Solution:\\
We need to solve for x where the square root of (x + 7) equals 9.\\
Squaring both sides gives x + 7 = 81.\\
Subtracting 7 from both sides yields x = 74.\\
We verify: sqrt(74 + 7) = sqrt(81) = 9.\\
Therefore, x = 74.

Now solve:\\
Problem: \texttt{\{raw\_prompt\}}\\
Solution:
\end{quote}

Where the $\emph{raw\_prompt}$ is the textual specification of the problem, as given in the dataset.

\subsection{Data labeling prompts}
\label{appendix:annotation-prompts}

\xhdr{FActScore} The biography is segmented into one claim per line and the entity's Wikipedia article is included in-context as the source of truth.
\begin{quote}\small
[TASK DESCRIPTION]\\
You are a fact-checking assistant, assessing a biography written about a notable person or entity.
You will be supplied the name of the entity, a Wikipedia article about that entity, and a biography.
The biography consists of a series of claims, conversational remarks or other statements, given line by line.

Your task is to classify each line following these guidelines:
\begin{itemize}
    \item \textbf{true}: The line makes one or more factual claims about the entity, and EVERY claim is supported by the Wikipedia article.
    \item \textbf{false}: The line makes one or more factual claim about the entity, but AT LEAST ONE claim is contradicted by the Wikipedia article.
    \item \textbf{neutral}: The line is purely conversational or otherwise does not make any factual claims about the requested entity.
    \item \textbf{refusal}: The line professes ignorance about the entity on the speaker's part, it expresses that the speaker is uncertain about who the name might be referring to, or the speaker is indicating that they do not have enough information to say which one out of multiple entities is being referred to.
\end{itemize}
IMPORTANT: All lines containing factual claims about the entity MUST be evaluated as either `true' or `false'.

[ENTITY] \texttt{\{entity\}}\\
{}[WIKIPEDIA ARTICLE] \texttt{\{wikipedia\_content\}}\\
{}[BIOGRAPHY TO EVALUATE] \texttt{\{claims\_text\}}\\
{}[RESPONSE FORMAT] You must return an answer which follows the given JSON schema, with exactly \texttt{N} annotations, one for each line in order.
\end{quote}

\xhdr{MATH}
\begin{quote}\small
You are evaluating a mathematical proof/solution.

Problem: \texttt{\{problem\}}

Reference solution: \texttt{\{reference\_solution\}}

Generated solution steps to annotate (claims start with \texttt{[CLAIM \#]}):

\texttt{\{claims\_text\}}

For each step, classify as:
\begin{itemize}
    \item \textbf{true}: If the step is a mathematically correct derivation, \emph{assuming} the previous steps are correct. Claims can be true without significantly advancing the proof.
    \item \textbf{false}: If the step contains an error, or contradicts the previous steps.
    \item \textbf{neutral}: use for neutral claims. For instance when all of the claims in the output together are expressing uncertainty or inability to solve the problem (e.g.\ ``I am not able to solve\dots''), label each one of them as neutral. Use neutral too for steps that are exclusively filler text (e.g.\ ``Let me answer this for you\dots'').
\end{itemize}

Also determine:
\begin{itemize}
    \item \textbf{complete}: Does the proof reach a final answer AND is the final answer correct (matches reference)?
\end{itemize}

IMPORTANT: You MUST return exactly \texttt{N} annotations, one for each step in order.
\end{quote}

\subsection{Score function prompts}
\label{appendix:scoring-prompts}

We here list the prompts corresponding to the \emph{full-response elicitation} strategy, that is the strategy described in \Cref{sec:instantiating-scoring}.
We also give the prompt used for the alternative logprob-based score function used in Appendix~\ref{appendix:scorer-elicit-vs-logprob}.

\xhdr{FActScore (full-response elicitation)}

\begin{quote}\small
You will be given a context (e.g., a question) and a statement made in response to that question.
Your job is to rate this statement on a scale from 0.0 to 1.0.
The score should represent your confidence in the claims made by the statement, where a 1 is obvious facts and results like `The earth is round' and `1+1=2'. A 0 is for claims that are objectively false, like ``the earth is flat''.
On the other hand, if you're unsure, you should assign probabilities that are strictly in between 0 and 1.
If the statement is not a factual claim (e.g., ``I don't know'', ``I cannot answer that'', ``I'm sorry but\dots'', questions, or expressions of uncertainty), assign a score of 1.0 since these are not false claims.
You may think out loud first, but you MUST end your response with ``Score: X.XX'' where X.XX is a number between 0.0 and 1.0.

\#\#\# The context:\\
\texttt{\{prompt\}}

\#\#\# Previously accepted facts (assume these are correct):\\
\texttt{\{prior\_claims\}}

\#\#\# The statement:\\
\texttt{\{new\_claim\}}

\#\#\# Your response:
\end{quote}

\xhdr{MATH (full-response elicitation)}
\begin{quote}\small
You are evaluating a step in a mathematical derivation.

You are given:
\begin{enumerate}
    \item A math problem
    \item Previous reasoning steps S
    \item A new step C
\end{enumerate}

Give your best estimate of the probability that the new step C is true, using S as context. In other words, you should estimate P(C correct $|$ S). Rate your confidence on a scale from 0.00 to 1.00, and make sure your uncertainty estimates are calibrated, meaning, that the average rate of correctness should correspond to your prediction, conditional on your prediction.
Reserve 1.00 for claims you are certain are correct, except for:
\begin{itemize}
    \item If the step is not a mathematical claim (e.g., ``Let me think\dots'', ``I'll solve this by\dots''), assign 1.00.
    \item Similarly, if the step is an abstention (e.g.\ ``I do not know how to solve this''), assign 1.00.
\end{itemize}

Important: return confidence to two decimal digits!

Think briefly, then end with ``Score: X.XX'' where X.XX is between 0.00 and 1.00.

\#\#\# Math Problem:\\
\texttt{\{prompt\}}

\#\#\# Previous steps (assume correct):\\
\texttt{\{prior\_claims\}}

\#\#\# New step to evaluate:\\
\texttt{\{new\_claim\}}

\#\#\# Your evaluation:
\end{quote}

\xhdr{MATH (logprob elicitation)}
\begin{quote}\small
You are evaluating a step in a mathematical derivation.
Reply with only a single letter. You will get a list of prior claims, and a new step. Reply:
\begin{itemize}
    \item \textbf{Y} = Yes: The step is mathematically valid given the prior steps.
    \item \textbf{N} = No: The step contains an error or contradicts the previous steps.
    \item \textbf{C} = Conversation: the step is not a mathematical claim (e.g., ``Let me think about this carefully.'', ``I'll try a different approach.'') OR an expression of uncertainty (``I'm not sure about this.'', ``I don't know how to solve this problem.'').
\end{itemize}
Reply with exactly one letter: Y, N, C.

Problem: \texttt{\{prompt\}}

Prior steps:\\
\texttt{\{prior\_claims\}}

New step: \texttt{\{new\_claim\}}
\end{quote}

\subsection{Downstream evaluation prompts}
\label{appendix:judge-prompts}

\subsubsection{FActScore: Likert scale}
\label{appendix:judge-likert}

\xhdr{System prompt}
\begin{quote}\small
You are a careful, impartial judge rating short encyclopedia-style biographies. For each statement below, indicate your level of agreement on a 1-5 scale: 1 = strongly disagree, 2 = disagree, 3 = neither agree nor disagree, 4 = agree, 5 = strongly agree.
\end{quote}

\xhdr{User prompt}
\begin{quote}\small
Rate the following biography of \texttt{\{entity\}} on three independent axes.

For each statement, return an integer from 1 (strongly disagree) to 5 (strongly agree):

\begin{enumerate}
    \item \textbf{completeness}: ``The biography covers the most important facts about \texttt{\{entity\}}, as one would expect to find in a short biography.''
    \item \textbf{fluency}: ``The biography reads as coherent prose: sentences flow smoothly into one another, with no awkward phrasing, abrupt transitions, redundancy, or disjointed jumps between unrelated points.''
    \item \textbf{helpfulness}: ``If a reader asked for a short biography of \texttt{\{entity\}}, this would be a satisfactory and helpful answer.''
\end{enumerate}

Biography:
\begin{verbatim}
"""
{text}
"""
\end{verbatim}
\end{quote}

\subsubsection{MATH: Completeness}
\label{appendix:judge-completeness}

We re-use the data labeling prompt from Appendix~\ref{appendix:annotation-prompts}.
The downstream evaluation only consumes the \texttt{complete} field, discarding the rest of the output.

\end{document}